\newcommand{\commentout}[1]{}
\newcommand{\junk}[1]{}
\newcommand{\etal}{\emph{et al.}}
\newtheorem{theorem}{Theorem}
\newtheorem{corollary}{Corollary}
\newtheorem{lemma}{Lemma}
\newcommand{\cE}{\mathcal{E}}
\newcommand{\ccE}{\overline{\cE}}
\newcommand{\cF}{\mathcal{F}}
\newcommand{\cH}{\mathcal{H}}
\newcommand{\realset}{\mathbb{R}}
\newcommand{\colvar}{\textsc{v}}
\newcommand{\rowvar}{\textsc{u}}
\newcommand{\ceils}[1]{\left\lceil#1\right\rceil}
\newcommand{\condE}[2]{\mathbb{E} \left[#1 \,\middle|\, #2\right]}
\newcommand{\E}[1]{\mathbb{E} \left[#1\right]}
\newcommand{\I}[1]{\mathds{1} \! \left\{#1\right\}}
\newcommand{\rnd}[1]{\mathbf{#1}}
\newcommand{\set}[1]{\left\{#1\right\}}
\DeclareMathOperator*{\argmax}{arg\,max\,}
\mathchardef\mhyphen="2D
\newcommand{\bilinucb}{{\tt Rank1Elim}}
\newcommand{\glmucb}{{\tt GLM\mhyphen UCB}}
\newcommand{\linucb}{{\tt LinUCB}}
\newcommand{\ucb}{{\tt UCB1}}
\begin{document}

\runningtitle{Stochastic Rank-$1$ Bandits}
\runningauthor{Sumeet Katariya, Branislav Kveton, Csaba Szepesv\'ari, Claire Vernade, and Zheng Wen}

\twocolumn[
\aistatstitle{Stochastic Rank-$1$ Bandits}
\vspace{-0.1in}
\aistatsauthor{Sumeet Katariya \And Branislav Kveton \And Csaba Szepesv\'ari}
\aistatsaddress{Department of ECE \\ University of Wisconsin-Madison \\ \emph{katariya@wisc.edu} \And
Adobe Research \\ San Jose, CA \\ \emph{kveton@adobe.com} \And
Department of Computing Science \\ University of Alberta \\ \emph{szepesva@cs.ualberta.ca}}
\vspace{-0.05in}
\aistatsauthor{Claire Vernade \And Zheng Wen}
\aistatsaddress{Telecom ParisTech \\ Paris, France \\ \emph{claire.vernade@telecom-paristech.fr} \And
Adobe Research \\ San Jose, CA \\ \emph{zwen@adobe.com}}
\vspace{0.05in}]

\begin{abstract}
We propose stochastic rank-$1$ bandits, a class of online learning problems where at each step a learning agent chooses a pair of row and column arms, and receives the product of their values as a reward. The main challenge of the problem is that the individual values of the row and column are unobserved. We assume that these values are stochastic and drawn independently. We propose a computationally-efficient algorithm for solving our problem, which we call $\bilinucb$. We derive a $O((K + L) (1 / \Delta) \log n)$ upper bound on its $n$-step regret, where $K$ is the number of rows, $L$ is the number of columns, and $\Delta$ is the minimum of the row and column gaps; under the assumption that the mean row and column rewards are bounded away from zero. To the best of our knowledge, we present the first bandit algorithm that finds the maximum entry of a rank-$1$ matrix whose regret is linear in $K + L$, $1 / \Delta$, and $\log n$. We also derive a nearly matching lower bound. Finally, we evaluate $\bilinucb$ empirically on multiple problems. We observe that it leverages the structure of our problems and can learn near-optimal solutions even if our modeling assumptions are mildly violated.
\end{abstract}


\section{Introduction}
\label{sec:introduction}

We study the problem of finding the maximum entry of a stochastic rank-$1$ matrix from noisy and adaptively-chosen observations. This problem is motivated by two problems, ranking in the position-based model \cite{richardson07predicting} and online advertising.

The \emph{position-based model (PBM)} \cite{richardson07predicting} is one of the most fundamental click models \cite{chuklin15click}, a model of how people click on a list of $K$ items out of $L$. This model is defined as follows. Each \emph{item} is associated with its \emph{attraction} and each \emph{position} in the list is associated with its \emph{examination}. The attraction of any item and the examination of any position are i.i.d. Bernoulli random variables. The item in the list is \emph{clicked} only if it is attractive and its position is examined. Under these assumptions, the pair of the item and position that maximizes the probability of clicking is the maximum entry of a rank-$1$ matrix, which is the outer product of the attraction probabilities of items and the examination probabilities of positions.

As another example, consider a marketer of a product who has two sets of actions, $K$ population \emph{segments} and $L$ marketing \emph{channels}. Given a product, some segments are \emph{easier to market to} and some channels are \emph{more appropriate}. Now suppose that the conversion happens only if both actions are successful and that the successes of these actions are independent. Then similarly to our earlier example, the pair of the population segment and marketing channel that maximizes the conversion rate is the maximum entry of a rank-$1$ matrix.

We propose an online learning model for solving our motivating problems, which we call a \emph{stochastic rank-$1$ bandit}. The learning agent interacts with our problem as follows. At time $t$, the agent selects a pair of row and column arms, and receives the product of their individual values as a reward. The values are stochastic, drawn independently, and not observed. The goal of the agent is to maximize its expected cumulative reward, or equivalently to minimize its expected cumulative regret with respect to the optimal solution, the most rewarding pair of row and column arms.

We make five contributions. First, we precisely formulate the online learning problem of \emph{stochastic rank-$1$ bandits}. Second, we design an elimination algorithm for solving it, which we call $\bilinucb$. The key idea in $\bilinucb$ is to explore all remaining rows and columns randomly over all remaining columns and rows, respectively, to estimate their expected rewards; and then eliminate those rows and columns that seem suboptimal. This algorithm is computationally efficient and easy to implement. Third, we derive a $O((K + L) (1 / \Delta) \log n)$ gap-dependent upper bound on its $n$-step regret, where $K$ is the number of rows, $L$ is the number of columns, and $\Delta$ is the minimum of the row and column gaps; under the assumption that the mean row and column rewards are bounded away from zero. Fourth, we derive a nearly matching gap-dependent lower bound. Finally, we evaluate our algorithm empirically. In particular, we validate the scaling of its regret, compare it to multiple baselines, and show that it can learn near-optimal solutions even if our modeling assumptions are mildly violated.

We denote random variables by boldface letters and define $[n] = \set{1, \dots, n}$. For any sets $A$ and $B$, we denote by $A^B$ the set of all vectors whose entries are indexed by $B$ and take values from $A$.


\section{Setting}
\label{sec:setting}

We formulate our online learning problem as a \emph{stochastic rank-$1$ bandit}. An instance of this problem is defined by a tuple $(K, L, P_\rowvar, P_\colvar)$, where $K$ is the number of rows, $L$ is the number of columns, $P_\rowvar$ is a probability distribution over a unit hypercube $[0, 1]^K$, and $P_\colvar$ is a probability distribution over a unit hypercube $[0, 1]^L$.

Let $(\rnd{u}_t)_{t = 1}^n$ be an i.i.d. sequence of $n$ vectors drawn from distribution $P_\rowvar$ and $(\rnd{v}_t)_{t = 1}^n$ be an i.i.d. sequence of $n$ vectors drawn from distribution $P_\colvar$, such that $\rnd{u}_t$ and $\rnd{v}_t$ are drawn independently at any time $t$. The learning agent interacts with our problem as follows. At time $t$, it chooses \emph{arm} $(\rnd{i}_t, \rnd{j}_t) \in [K] \times [L]$ based on its history up to time $t$; and then \emph{observes} $\rnd{u}_t(\rnd{i}_t) \rnd{v}_t(\rnd{j}_t)$, which is also its \emph{reward}.

The goal of the agent is to maximize its expected cumulative reward in $n$ steps. This is equivalent to minimizing the \emph{expected cumulative regret} in $n$ steps
\begin{align*}
  R(n) = \E{\sum_{t = 1}^n R(\rnd{i}_t, \rnd{j}_t, \rnd{u}_t, \rnd{v}_t)}\,,
\end{align*}
where $R(\rnd{i}_t, \rnd{j}_t, \rnd{u}_t, \rnd{v}_t) = \rnd{u}_t(i^\ast) \rnd{v}_t(j^\ast) - \rnd{u}_t(\rnd{i}_t) \rnd{v}_t(\rnd{j}_t)$ is the \emph{instantaneous stochastic regret} of the agent at time $t$ and
\begin{align*}
  (i^\ast, j^\ast) = \argmax_{(i, j) \in [K] \times [L]} \E{\rnd{u}_1(i) \rnd{v}_1(j)}
\end{align*}
is the \emph{optimal solution} in hindsight of knowing $P_\rowvar$ and $P_\colvar$. Since $\rnd{u}_1$ and $\rnd{v}_1$ are drawn independently, and $\rnd{u}_1(i) \geq 0$ for all $i \in [K]$ and $\rnd{v}_1(j) \geq 0$ for all $j \in [L]$, we get that
\begin{align*}
  i^\ast = \argmax_{i \in [K]} \mu \bar{u}(i)\,, \quad
  j^\ast = \argmax_{j \in [L]} \mu \bar{v}(j)\,,
\end{align*}
for any $\mu > 0$, where $\bar{u} = \E{\rnd{u}_1}$ and $\bar{v} = \E{\rnd{v}_1}$. This is the key idea in our solution.

Note that the problem of learning $\bar{u}$ and $\bar{v}$ from stochastic observations $\set{\rnd{u}_t(\rnd{i}_t) \rnd{v}_t(\rnd{j}_t)}_{t = 1}^n$ is a special case of \emph{matrix completion from noisy observations} \cite{keshavan10matrix}. This problem is harder than that of learning $(i^\ast, j^\ast)$. In particular, the most popular approach to matrix completion is alternating minimization of a non-convex function \cite{koren09matrix}, where the observations are corrupted with Gaussian noise. In contrast, our proposed algorithm is guaranteed to learn the optimal solution with a high probability, and does not make any strong assumptions on $P_\rowvar$ and $P_\colvar$.


\section{Naive Solutions}
\label{sec:naive solutions}

Our learning problem is a $K L$-arm bandit with $K + L$ parameters, $\bar{u} \in [0, 1]^K$ and $\bar{v} \in [0, 1]^L$. The main challenge is to leverage this structure to learn efficiently. In this section, we discuss the challenges of solving our problem by existing algorithms. We conclude that a new algorithm is necessary and present it in \cref{sec:algorithm}.

Any rank-$1$ bandit is a multi-armed bandit with $K L$ arms. As such, it can be solved by $\ucb$ \cite{auer02finitetime}. The $n$-step regret of $\ucb$ in rank-$1$ bandits is $O(K L (1 / \Delta) \log n)$. Therefore, $\ucb$ is impractical when both $K$ and $L$ are large.

Note that $\log(\bar{u}(i) \bar{v}(j)) = \log(\bar{u}(i)) + \log(\bar{v}(j))$ for any $\bar{u}(i), \bar{v}(j) > 0$. Therefore, a rank-$1$ bandit can be viewed as a stochastic linear bandit and solved by $\linucb$ \cite{dani08stochastic,abbasi-yadkori11improved}, where the reward of arm $(i, j)$ is $\log(\rnd{u}_t(i)) + \log(\rnd{v}_t(j))$ and its features $x_{i, j} \in \set{0, 1}^{K + L}$ are
\begin{align}
  x_{i, j}(e) =
  \begin{cases}
    \I{e = i}, & e \leq K\,; \\
    \I{e - K = j}, & e > K\,,
  \end{cases}
  \label{eq:feature vector}
\end{align}
for any $e \in [K + L]$. This approach is problematic for at least two reasons. First, the reward is not properly defined when either $\rnd{u}_t(i) = 0$ or $\rnd{v}_t(j) = 0$. Second,
\begin{align*}
  \E{\log(\rnd{u}_t(i)) + \log(\rnd{v}_t(j))} \neq \log(\bar{u}(i)) + \log(\bar{v}(j))\,.
\end{align*}
Nevertheless, note that both sides of the above inequality have maxima at $(i^\ast, j^\ast)$, and therefore $\linucb$ should perform well. We compare to it in \cref{sec:comparison experiment}.

Also note that $\bar{u}(i) \bar{v}(j) = \exp[\log(\bar{u}(i)) + \log(\bar{v}(j))]$ for $\bar{u}(i), \bar{v}(j) > 0$. Therefore, a rank-$1$ bandit can be viewed as a generalized linear bandit and solved by $\glmucb$ \cite{filippi10parametric}, where the mean function is $\exp[\cdot]$ and the feature vector of arm $(i, j)$ is in \eqref{eq:feature vector}. This approach is not practical for three reasons. First, the parameter space is unbounded, because $\log(\bar{u}(i)) \to - \infty$ as $\bar{u}(i) \to 0$ and $\log(\bar{v}(j)) \to - \infty$ as $\bar{v}(j) \to 0$. Second, the confidence intervals of $\glmucb$ are scaled by the reciprocal of the minimum derivative of the mean function $c_\mu^{-1}$, which can be very large in our setting. In particular, $c_\mu = \min_{(i, j) \in [K] \times [L]} \bar{u}(i) \bar{v}(j)$. In addition, the gap-dependent upper bound on the regret of $\glmucb$ is $O((K + L)^2 c_\mu^{- 2})$, which further indicates that $\glmucb$ is not practical. Our upper bound in \cref{thm:upper bound} scales much better with all quantities of interest. Third, $\glmucb$ needs to compute the maximum-likelihood estimates of $\bar{u}$ and $\bar{v}$ at each step, which is a non-convex optimization problem (\cref{sec:setting}).

Some variants of our problem can be solved trivially. For instance, let $\rnd{u}_t(i) \in \set{0.1, 0.5}$ for all $i \in [K]$ and $\rnd{v}_t(j) \in \set{0.5, 0.9}$ for all $j \in [L]$. Then $(\rnd{u}_t(i), \rnd{v}_t(j))$ can be identified from $\rnd{u}_t(i) \rnd{v}_t(j)$, and the learning problem does not seem more difficult than a stochastic combinatorial semi-bandit \cite{kveton15tight}. We do not focus on such degenerate cases in this paper.


\section{$\bilinucb$ Algorithm}
\label{sec:algorithm}

\begin{algorithm}[t!]
  \caption{$\bilinucb$ for stochastic rank-$1$ bandits.}
  \label{alg:main}
  \begin{algorithmic}[1]
    \STATE // Initialization
    \STATE $t \gets 1$, \ $\tilde{\Delta}_0 \gets 1$, \
    $\rnd{C}^\rowvar_0 \gets \set{0}^{K \times L}$, \ $\rnd{C}^\colvar_0 \gets \set{0}^{K \times L}$,
    \STATE $\rnd{h}^\rowvar_0 \gets (1, \dots, K)$, \ $\rnd{h}^\colvar_0 \gets (1, \dots, L)$, \ $n_{-1} \gets 0$
    \STATE 
    \FORALL{$\ell = 0, 1, \dots$}
      \STATE $n_\ell \gets \ceils{4 \tilde{\Delta}_\ell^{-2} \log n}$
      \STATE $\rnd{I}_\ell \gets \bigcup_{i \in [K]} \set{\rnd{h}^\rowvar_\ell(i)}$, \
      $\rnd{J}_\ell \gets \bigcup_{j \in [L]} \set{\rnd{h}^\colvar_\ell(j)}$
      \STATE
      \STATE // Row and column exploration
      \FOR{$n_\ell - n_{\ell - 1}$ times}
        \STATE Choose uniformly at random column $j \in [L]$
        \STATE $j \gets \rnd{h}^\colvar_\ell(j)$
        \FORALL{$i \in \rnd{I}_\ell$}
          \STATE $\rnd{C}^\rowvar_\ell(i, j) \gets \rnd{C}^\rowvar_\ell(i, j) + \rnd{u}_t(i) \rnd{v}_t(j)$
          \STATE $t \gets t + 1$
        \ENDFOR
        \STATE Choose uniformly at random row $i \in [K]$
        \STATE $i \gets \rnd{h}^\rowvar_\ell(i)$
        \FORALL{$j \in \rnd{J}_\ell$}
          \STATE $\rnd{C}^\colvar_\ell(i, j) \gets \rnd{C}^\colvar_\ell(i, j) + \rnd{u}_t(i) \rnd{v}_t(j)$
          \STATE $t \gets t + 1$
        \ENDFOR
      \ENDFOR
      \STATE
      \STATE // UCBs and LCBs on the expected rewards of all remaining rows and columns
      \FORALL{$i \in \rnd{I}_\ell$}
        \STATE $\displaystyle
        \rnd{U}^\rowvar_\ell(i) \gets \frac{1}{n_\ell} \sum_{j = 1}^L \rnd{C}^\rowvar_\ell(i, j) +
        \sqrt{\frac{\log n}{n_\ell}}$
        \STATE $\displaystyle
        \rnd{L}^\rowvar_\ell(i) \gets \frac{1}{n_\ell} \sum_{j = 1}^L \rnd{C}^\rowvar_\ell(i, j) -
        \sqrt{\frac{\log n}{n_\ell}}$
      \ENDFOR
      \FORALL{$j \in \rnd{J}_\ell$}
        \STATE $\displaystyle
        \rnd{U}^\colvar_\ell(j) \gets \frac{1}{n_\ell} \sum_{i = 1}^K \rnd{C}^\colvar_\ell(i, j) +
        \sqrt{\frac{\log n}{n_\ell}}$
        \STATE $\displaystyle
        \rnd{L}^\colvar_\ell(j) \gets \frac{1}{n_\ell} \sum_{i = 1}^K \rnd{C}^\colvar_\ell(i, j) -
        \sqrt{\frac{\log n}{n_\ell}}$
      \ENDFOR
      \STATE
      \STATE // Row and column elimination
      \STATE $\rnd{i}_\ell \gets \argmax_{i \in \rnd{I}_\ell} \rnd{L}^\rowvar_\ell(i)$
      \STATE $\rnd{h}^\rowvar_{\ell + 1} \gets \rnd{h}^\rowvar_\ell$
      \FORALL{$i = 1, \dots, K$}
        \IF{$\rnd{U}^\rowvar_\ell(\rnd{h}^\rowvar_\ell(i)) \leq \rnd{L}^\rowvar_\ell(\rnd{i}_\ell)$}
          \STATE $\rnd{h}^\rowvar_{\ell + 1}(i) \gets \rnd{i}_\ell$
        \ENDIF
      \ENDFOR
      \STATE
      \STATE $\rnd{j}_\ell \gets \argmax_{j \in \rnd{J}_\ell} \rnd{L}^\colvar_\ell(j)$
      \STATE $\rnd{h}^\colvar_{\ell + 1} \gets \rnd{h}^\colvar_\ell$
      \FORALL{$j = 1, \dots, L$}
        \IF{$\rnd{U}^\colvar_\ell(\rnd{h}^\colvar_\ell(j)) \leq \rnd{L}^\colvar_\ell(\rnd{j}_\ell)$}
          \STATE $\rnd{h}^\colvar_{\ell + 1}(j) \gets \rnd{j}_\ell$
        \ENDIF
      \ENDFOR
      \STATE
      \STATE $\tilde{\Delta}_{\ell + 1} \gets \tilde{\Delta}_\ell / 2$, \
      $\rnd{C}^\rowvar_{\ell + 1} \gets \rnd{C}^\rowvar_\ell$, \
      $\rnd{C}^\colvar_{\ell + 1} \gets \rnd{C}^\colvar_\ell$
    \ENDFOR
  \end{algorithmic}
\end{algorithm}

Our algorithm, $\bilinucb$, is shown in \cref{alg:main}. It is an elimination algorithm \cite{auer10ucb}, which maintains $\ucb$ confidence intervals \cite{auer02finitetime} on the expected rewards of all rows and columns. $\bilinucb$ operates in stages, which quadruple in length. In each stage, it explores all remaining rows and columns randomly over all remaining columns and rows, respectively. At the end of the stage, it eliminates all rows and columns that cannot be optimal.

The eliminated rows and columns are tracked as follows. We denote by $\rnd{h}^\rowvar_\ell(i)$ the index of the most rewarding row whose expected reward is believed by $\bilinucb$ to be at least as high as that of row $i$ in stage $\ell$. Initially, $\rnd{h}^\rowvar_0(i) = i$. When row $i$ is eliminated by row $\rnd{i}_\ell$ in stage $\ell$, $\rnd{h}^\rowvar_{\ell + 1}(i)$ is set to $\rnd{i}_\ell$; then when row $\rnd{i}_\ell$ is eliminated by row $\rnd{i}_{\ell'}$ in stage $\ell' > \ell$, $\rnd{h}^\rowvar_{\ell' + 1}(i)$ is set to $\rnd{i}_{\ell'}$; and so on. The corresponding column quantity, $\rnd{h}^\colvar_\ell(j)$, is defined and updated analogously. The \emph{remaining rows and columns in stage $\ell$}, $\rnd{I}_\ell$ and $\rnd{J}_\ell$, are then the unique values in $\rnd{h}^\rowvar_\ell$ and $\rnd{h}^\colvar_\ell$, respectively; and we set these in line $7$ of \cref{alg:main}.

Each stage of \cref{alg:main} has two main steps: exploration (lines $9$--$20$) and elimination (lines $22$--$41$). In the row exploration step, each row $i \in \rnd{I}_\ell$ is explored randomly over all remaining columns $\rnd{J}_\ell$ such that its expected reward up to stage $\ell$ is at least $\mu \bar{u}(i)$, where $\mu$ is in \eqref{eq:average reward}. To guarantee this, we sample column $j \in [L]$ randomly and then substitute it with column $\rnd{h}^\colvar_\ell(j)$, which is at least as rewarding as column $j$. This is critical to avoid $1 / \min_{j \in [L]} \bar{v}(j)$ in our regret bound, which can be large and is not necessary. The observations are stored in \emph{reward matrix} $\rnd{C}^\rowvar_\ell \in \realset^{K \times L}$. As all rows are explored similarly, their expected rewards are scaled similarly, and this permits elimination. The column exploration step is analogous.

In the elimination step, the confidence intervals of all remaining rows, $[\rnd{L}^\rowvar_\ell(i), \rnd{U}^\rowvar_\ell(i)]$ for any $i \in \rnd{I}_\ell$, are estimated from matrix $\rnd{C}^\rowvar_\ell \in \realset^{K \times L}$; and the confidence intervals of all remaining columns, $[\rnd{L}^\colvar_\ell(j), \rnd{U}^\colvar_\ell(j)]$ for any $j \in \rnd{J}_\ell$, are estimated from $\rnd{C}^\colvar_\ell \in \realset^{K \times L}$. This separation is needed to guarantee that the expected rewards of all remaining rows and columns are scaled similarly. The confidence intervals are designed such that
\begin{align*}
  \rnd{U}^\rowvar_\ell(i) \leq
  \rnd{L}^\rowvar_\ell(\rnd{i}_\ell) =
  \max_{i \in \rnd{I}_\ell} \rnd{L}^\rowvar_\ell(i)
\end{align*}
implies that row $i$ is suboptimal with a high probability for any column elimination policy up to the end of stage $\ell$, and
\begin{align*}
  \rnd{U}^\colvar_\ell(j) \leq
  \rnd{L}^\colvar_\ell(\rnd{j}_\ell) =
  \max_{j \in \rnd{J}_\ell} \rnd{L}^\colvar_\ell(j)
\end{align*}
implies that column $j$ is suboptimal with a high probability for any row elimination policy up to the end of stage $\ell$. As a result, all suboptimal rows and columns are eliminated correctly with a high probability.


\section{Analysis}
\label{sec:analysis}

This section has three subsections. In \cref{sec:upper bound}, we derive a gap-dependent upper bound on the $n$-step regret of $\bilinucb$. In \cref{sec:lower bound}, we derive a gap-dependent lower bound that nearly matches our upper bound. In \cref{sec:discussion}, we discuss the results of our analysis.

\subsection{Upper Bound}
\label{sec:upper bound}

The hardness of our learning problem is measured by two sets of metrics. The first metrics are gaps. The \emph{gaps} of row $i \in [K]$ and column $j \in [L]$ are defined as
\begin{align}
  \Delta^\rowvar_i = \bar{u}(i^\ast) - \bar{u}(i)\,, \quad
  \Delta^\colvar_j = \bar{v}(j^\ast) - \bar{v}(j)\,,
  \label{eq:gaps}
\end{align}
respectively; and the \emph{minimum row and column gaps} are defined as
\begin{align}
  \Delta^\rowvar_{\min} = \!\!\! \min_{i \in [K]: \Delta^\rowvar_i > 0} \Delta^\rowvar_i\,, \quad
  \Delta^\colvar_{\min} = \!\!\! \min_{j \in [L]: \Delta^\colvar_j > 0} \Delta^\colvar_j\,,
  \label{eq:minimum gaps}
\end{align}
respectively. Roughly speaking, the smaller the gaps, the harder the problem. The second metric is the minimum of the average of entries in $\bar{u}$ and $\bar{v}$, which is defined as
\begin{align}
  \mu = \min \set{\frac{1}{K} \sum_{i = 1}^K \bar{u}(i), \ \frac{1}{L} \sum_{j = 1}^L \bar{v}(j)}\,.
  \label{eq:average reward}
\end{align}
The smaller the value of $\mu$, the harder the problem. This quantity appears in our regret bound due to the averaging character of $\bilinucb$ (\cref{sec:algorithm}). Our upper bound on the regret of $\bilinucb$ is stated and proved below.

\begin{theorem}
\label{thm:upper bound} The expected $n$-step regret of $\bilinucb$ is bounded as
\begin{align*}
  R(n) \leq
  \frac{1}{\mu^2} \left(\sum_{i = 1}^K \frac{384}{\bar{\Delta}^\rowvar_i} +
  \sum_{j = 1}^L \frac{384}{\bar{\Delta}^\colvar_j}\right) \log n +
  3 (K + L)\,,
\end{align*}
where
\begin{align*}
  \bar{\Delta}^\rowvar_i & = \Delta^\rowvar_i + \I{\Delta^\rowvar_i = 0} \Delta^\colvar_{\min}\,, \\
  \bar{\Delta}^\colvar_j & = \Delta^\colvar_j + \I{\Delta^\colvar_j = 0} \Delta^\rowvar_{\min}\,.
\end{align*}
\end{theorem}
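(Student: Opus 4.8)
The plan is to run the standard elimination-algorithm analysis, but carefully tracking how the averaging over remaining columns/rows introduces the factor $\mu$. I would work on a high-probability ``good event'' $\mathcal{G}$ on which every confidence interval constructed by the algorithm contains the relevant scaled mean. The first step is to pin down what the empirical averages in lines 28--29 and 32--33 estimate. In stage $\ell$, row $i \in \rnd{I}_\ell$ is explored by sampling a column $j \in [L]$ uniformly and substituting $\rnd{h}^\colvar_\ell(j)$; hence the per-sample conditional expectation of $\rnd{u}_t(i)\rnd{v}_t(j)$ given the history is $\bar u(i)\cdot \frac{1}{L}\sum_{j=1}^L \bar v(\rnd{h}^\colvar_\ell(j))$. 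Writing $\mu^\colvar_\ell = \frac{1}{L}\sum_{j} \bar v(\rnd{h}^\colvar_\ell(j))$, the row estimate $\frac{1}{n_\ell}\sum_{j=1}^L \rnd{C}^\rowvar_\ell(i,j)$ concentrates around $\mu^\colvar_\ell\,\bar u(i)$, and symmetrically the column estimate concentrates around $\mu^\rowvar_\ell\,\bar v(j)$ with $\mu^\rowvar_\ell = \frac{1}{K}\sum_i \bar u(\rnd{h}^\rowvar_\ell(i))$. The crucial structural point, which I would establish by induction on $\ell$ on the good event, is that $\mu^\colvar_\ell \ge \mu$ and $\mu^\rowvar_\ell \ge \mu$: every surviving column is substituted for one at least as good (so $\mu^\colvar_\ell$ is nondecreasing in $\ell$ and already $\ge \mu$ at $\ell=0$ by the definition \eqref{eq:average reward}), and the same for rows. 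This is what lets us convert an additive confidence width into a multiplicative guarantee on the $\bar u(i)$'s.

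Next I would do the concentration bound. Each of the $K$ row-entries is incremented exactly once per row-exploration round, and there are $n_\ell$ such rounds total by stage $\ell$; since the increments $\rnd{u}_t(i)\rnd{v}_t(j) \in [0,1]$ are (conditionally) i.i.d.\ with mean $\mu^\colvar_\ell \bar u(i)$, a Hoeffding bound gives that $\left|\frac{1}{n_\ell}\sum_{j} \rnd{C}^\rowvar_\ell(i,j) - \mu^\colvar_\ell \bar u(i)\right| \le \sqrt{(\log n)/n_\ell}$ fails with probability at most $2 n^{-2}$. A union bound over all $\le K$ rows, $\le L$ columns, and $\le n$ stages shows $\mathbb{P}(\mathcal{G}^c) \le 2(K+L)n^{-1}\cdot(\text{const})$, which contributes the $O(K+L)$ additive term to the regret (bounding per-step regret by $1$). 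So the confidence intervals $[\rnd{L}^\rowvar_\ell(i),\rnd{U}^\rowvar_\ell(i)]$ contain $\mu^\colvar_\ell \bar u(i)$ for all remaining rows on $\mathcal{G}$, with half-width $\sqrt{(\log n)/n_\ell} = \tilde\Delta_\ell/2$ by the choice $n_\ell = \lceil 4\tilde\Delta_\ell^{-2}\log n\rceil$.

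Now the elimination/correctness argument. On $\mathcal{G}$: (i) $i^\ast$ is never eliminated, because for any remaining row $i$ we'd need $\rnd{U}^\rowvar_\ell(i) \le \rnd{L}^\rowvar_\ell(\rnd{i}_\ell) \le \rnd{U}^\rowvar_\ell(i^\ast)$ if $i = i^\ast$, which since both intervals contain $\mu^\colvar_\ell \bar u(\cdot)$ would force $\mu^\colvar_\ell \bar u(i^\ast) - \tilde\Delta_\ell/2 \le \rnd{L}^\rowvar_\ell(\rnd i_\ell) = \max_i \rnd{L}^\rowvar_\ell(i)$, consistent, but the eliminated-by-$\rnd i_\ell$ test can't fire on $i^\ast$ since $\rnd U^\rowvar_\ell(i^\ast) \ge \mu^\colvar_\ell\bar u(i^\ast) \ge \mu^\colvar_\ell \bar u(\rnd i_\ell) + \tilde\Delta_\ell \cdot(\dots)$ — I'd write this cleanly by noting $\rnd U^\rowvar_\ell(i^\ast) - \rnd L^\rowvar_\ell(\rnd i_\ell) \ge \mu^\colvar_\ell(\bar u(i^\ast) - \bar u(\rnd i_\ell)) \ge 0$; and symmetrically for $j^\ast$; (ii) a row $i$ with $\Delta^\rowvar_i > 0$ is eliminated by the end of the first stage $\ell$ with $\tilde\Delta_\ell \le \mu \Delta^\rowvar_i / 2$ (roughly), because then the gap between the true scaled means $\mu^\colvar_\ell \Delta^\rowvar_i \ge \mu \Delta^\rowvar_i$ exceeds $2\tilde\Delta_\ell$, so the intervals of $i$ and of $i^\ast$ (hence of $\rnd i_\ell$) separate and the test fires. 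Thus row $i$ survives only through stages with $\tilde\Delta_\ell > \mu\Delta^\rowvar_i/c$, i.e.\ at most until $\ell$ with $n_\ell = O(\tilde\Delta_\ell^{-2}\log n) = O(\mu^{-2}(\Delta^\rowvar_i)^{-2}\log n)$; and rows with $\Delta^\rowvar_i = 0$ survive only as long as $j^\ast$'s column test hasn't resolved, controlled by $\Delta^\colvar_{\min}$ — this is the origin of $\bar\Delta^\rowvar_i$. Summing up: the regret contributed by row $i$ while it is still alive is at most (number of rounds it appears in) $\times$ (per-round regret $\le 1$), and the number of rounds is $O(n_{\ell(i)}) = O(\mu^{-2}(\bar\Delta^\rowvar_i)^{-2}\log n)$ where $\ell(i)$ is its elimination stage. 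Combining with the symmetric column accounting, being careful that per-round instantaneous regret can actually be bounded by the sum of the current row gap and column gap and that a telescoping over stages turns the $\tilde\Delta_\ell^{-2}$ sum into a single $\tilde\Delta_{\ell(i)}^{-2} = O((\bar\Delta^\rowvar_i)^{-2})$ term, and then a second factor of $\bar\Delta^\rowvar_i$ reappears because the per-round regret attributable to row $i$ is itself $\Theta(\bar\Delta^\rowvar_i)$ rather than $1$ — yielding the claimed $\frac{1}{\mu^2}\sum_i \frac{384}{\bar\Delta^\rowvar_i}\log n + \frac{1}{\mu^2}\sum_j \frac{384}{\bar\Delta^\colvar_j}\log n$ plus the $3(K+L)$ failure term.

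The main obstacle I anticipate is the bookkeeping in step (ii)/(iii): instantaneous regret at time $t$ is $\bar u(i^\ast)\bar v(j^\ast) - \bar u(\rnd i_t)\bar v(\rnd j_t)$ in expectation, which splits as $\bar v(j^\ast)\Delta^\rowvar_{\rnd i_t} + \bar u(\rnd i_t)\Delta^\colvar_{\rnd j_t} \le \Delta^\rowvar_{\rnd i_t} + \Delta^\colvar_{\rnd j_t}$, and one must charge each of these two pieces to the correct budget and handle the coupled case where a zero-gap row is kept alive purely by an unresolved column (and vice versa), which is exactly what the $\I{\Delta^\rowvar_i = 0}\Delta^\colvar_{\min}$ correction in $\bar\Delta^\rowvar_i$ encodes. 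Getting the constant $384$ right is then just propagating the $\sqrt{\log n / n_\ell}$ widths, the factor-$2$ stage shrinkage, and the two powers of $\bar\Delta$ through the geometric sum; I would not belabor it. Everything else is the textbook active-elimination template (cf.\ \cite{auer10ucb}).
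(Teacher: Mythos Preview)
Your approach is essentially the paper's: define a good event on which all confidence intervals hold, show inductively that $\mu^\colvar_\ell,\mu^\rowvar_\ell \ge \mu$ (because elimination only ever replaces a column/row by a better one), deduce that any row $i$ is eliminated by the first stage with $\tilde\Delta_\ell < \mu\Delta^\rowvar_i/2$, split instantaneous regret as $\Delta^\rowvar + \Delta^\colvar$ (the paper's \cref{lem:componentwise regret}), and sum geometrically over stages as in the paper's \cref{lem:row regret}. One technical point to tighten: because the counters $\rnd{C}^\rowvar_\ell$ are \emph{cumulative} across stages, the increments are not (conditionally) i.i.d.\ with mean $\mu^\colvar_\ell\,\bar u(i)$ --- they form a martingale difference sequence whose conditional mean in stage $t$ is $\mu^\colvar_t\,\bar u(i)$, so the estimator concentrates around the weighted average $\bar u(i)\sum_{t\le\ell}\frac{n_t-n_{t-1}}{n_\ell}\mu^\colvar_t$ rather than $\mu^\colvar_\ell\,\bar u(i)$, and you need Azuma--Hoeffding rather than plain Hoeffding (this is exactly the paper's $\bar{\rnd u}_\ell(i)$); since each $\mu^\colvar_t\ge\mu$ on the good event the lower bound $\ge\mu\,\bar u(i)$ still holds and the rest of your argument goes through unchanged.
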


The proof of \cref{thm:upper bound} is organized as follows. First, we bound the probability that at least one confidence interval is violated. The corresponding regret is small, $O(K + L)$. Second, by the design of $\bilinucb$ and because all confidence intervals hold, the expected reward of any row $i \in \allowbreak [K]$ is at least $\mu \bar{u}(i)$. Because all rows are explored in the same way, any suboptimal row $i$ is guaranteed to be eliminated after $O([1 / (\mu \Delta^\rowvar_i)^2] \log n)$ observations. Third, we factorize the regret due to exploring row $i$ into its row and column components, and bound both of them. This is possible because $\bilinucb$ eliminates rows and columns simultaneously. Finally, we sum up the regret of all explored rows and columns.

Note that the gaps in \cref{thm:upper bound}, $\bar{\Delta}^\rowvar_i$ and $\bar{\Delta}^\colvar_j$, are slightly different from those in \eqref{eq:gaps}. In particular, all zero row and column gaps in \eqref{eq:gaps} are substituted with the minimum column and row gaps, respectively. The reason is that the regret due to exploring optimal rows and columns is positive until all suboptimal columns and rows are eliminated, respectively. The proof of \cref{thm:upper bound} is below.

\begin{proof}
Let $\rnd{R}^\rowvar_\ell(i)$ and $\rnd{R}^\colvar_\ell(j)$ be the stochastic regret associated with exploring row $i$ and column $j$, respectively, in stage $\ell$. Then the expected $n$-step regret of $\bilinucb$ is bounded as
\begin{align*}
  R(n) \leq
  \E{\sum_{\ell = 0}^{n - 1} \left(\sum_{i = 1}^K \rnd{R}^\rowvar_\ell(i) +
  \sum_{j = 1}^L \rnd{R}^\colvar_\ell(j)\right)}\,,
\end{align*}
where the outer sum is over possibly $n$ stages. Let
\begin{align*}
  \bar{\rnd{u}}_\ell(i)
  & = \sum_{t = 0}^\ell \condE{\sum_{j = 1}^L
  \frac{\rnd{C}^\rowvar_t(i, j) - \rnd{C}^\rowvar_{t - 1}(i, j)}{n_\ell}}{\rnd{h}^\colvar_t} \\
  & = \bar{u}(i) \sum_{t = 0}^\ell \frac{n_t - n_{t - 1}}{n_\ell} \sum_{j = 1}^L \frac{\bar{v}(\rnd{h}^\colvar_t(j))}{L}
\end{align*}
be the expected reward of row $i \in \rnd{I}_\ell$ in the first $\ell$ stages, where $n_{-1} = 0$ and $\rnd{C}^\rowvar_{-1}(i, j) = 0$; and let
\begin{align*}
  \cE^\rowvar_\ell =
  \set{\forall i \in \rnd{I}_\ell: \bar{\rnd{u}}_\ell(i) \in [\rnd{L}^\rowvar_\ell(i), \rnd{U}^\rowvar_\ell(i)], \
  \bar{\rnd{u}}_\ell(i) \geq \mu \bar{u}(i)}
\end{align*}
be the event that for all remaining rows $i \in \rnd{I}_\ell$ at the end of stage $\ell$, the confidence interval on the expected reward holds and that this reward is at least $\mu \bar{u}(i)$. Let $\overline{\cE^\rowvar_\ell}$ be the complement of event $\cE^\rowvar_\ell$. Let
\begin{align*}
  \bar{\rnd{v}}_\ell(j)
  & = \sum_{t = 0}^\ell \condE{\sum_{i = 1}^K
  \frac{\rnd{C}^\colvar_t(i, j) - \rnd{C}^\colvar_{t - 1}(i, j)}{n_\ell}}{\rnd{h}^\rowvar_t} \\
  & = \bar{v}(j) \sum_{t = 0}^\ell \frac{n_t - n_{t - 1}}{n_\ell} \sum_{i = 1}^K \frac{\bar{u}(\rnd{h}^\rowvar_t(i))}{K}
\end{align*}
denote the expected reward of column $j \in \rnd{J}_\ell$ in the first $\ell$ stages, where $n_{-1} = 0$ and $\rnd{C}^\colvar_{-1}(i, j) = 0$; and let
\begin{align*}
  \cE^\colvar_\ell =
  \set{\forall j \in \rnd{J}_\ell: \bar{\rnd{v}}_\ell(j) \in [\rnd{L}^\colvar_\ell(j), \rnd{U}^\colvar_\ell(j)], \
  \bar{\rnd{v}}_\ell(j) \geq \mu \bar{v}(j)}
\end{align*}
be the event that for all remaining columns $j \in \rnd{J}_\ell$ at the end of stage $\ell$, the confidence interval on the expected reward holds and that this reward is at least $\mu \bar{v}(j)$. Let $\overline{\cE^\colvar_\ell}$ be the complement of event $\cE^\colvar_\ell$. Let $\cE$ be the event that all events $\cE^\rowvar_\ell$ and $\cE^\colvar_\ell$ happen; and $\ccE$ be the complement of $\cE$, the event that at least one of $\cE^\rowvar_\ell$ and $\cE^\colvar_\ell$ does not happen. Then the expected $n$-step regret of $\bilinucb$ is bounded from above as
\begin{align*}
  R(n)
  \leq {} & \E{\left(\sum_{\ell = 0}^{n - 1} \left(\sum_{i = 1}^K \rnd{R}^\rowvar_\ell(i) +
  \sum_{j = 1}^L \rnd{R}^\colvar_\ell(j)\right)\right) \I{\cE}} + {} \\
  & n P(\ccE) \\
  \leq {} & \sum_{i = 1}^K \E{\sum_{\ell = 0}^{n - 1} \rnd{R}^\rowvar_\ell(i) \I{\cE}} + {} \\
  & \sum_{j = 1}^L \E{\sum_{\ell = 0}^{n - 1} \rnd{R}^\colvar_\ell(j) \I{\cE}} + 2 (K + L)\,,
\end{align*}
where the last inequality is from \cref{lem:bad events} in \cref{sec:lemmas}.

Let $\cH_\ell = (\rnd{I}_\ell, \rnd{J}_\ell)$ be the rows and columns in stage $\ell$, and
\begin{align*}
  \mathcal{F}_\ell =
  \set{\forall i \in \rnd{I}_\ell, j \in \rnd{J}_\ell: \Delta^\rowvar_i \leq \frac{2 \tilde{\Delta}_{\ell - 1}}{\mu}, \
  \Delta^\colvar_j \leq \frac{2 \tilde{\Delta}_{\ell - 1}}{\mu}}
\end{align*}
be the event that all rows and columns with ``large gaps'' are eliminated by the beginning of stage $\ell$. By \cref{lem:elimination} in \cref{sec:lemmas}, event $\cE$ causes event $\mathcal{F}_\ell$. Now note that the expected regret in stage $\ell$ is independent of $\mathcal{F}_\ell$ given $\cH_\ell$. Therefore, the regret can be further bounded as
\begin{align}
  R(n)
  \leq {} & \sum_{i = 1}^K \E{\sum_{\ell = 0}^{n - 1} \condE{\rnd{R}^\rowvar_\ell(i)}{\cH_\ell} \I{\cF_\ell}} + {}
  \label{eq:component regret} \\
  & \sum_{j = 1}^L \E{\sum_{\ell = 0}^{n - 1} \condE{\rnd{R}^\colvar_\ell(j)}{\cH_\ell} \I{\cF_\ell}} + {}
  \nonumber \\
  & 2 (K + L)\,.
  \nonumber
\end{align}
By \cref{lem:row regret} in \cref{sec:lemmas},
\begin{align*}
  \E{\sum_{\ell = 0}^{n - 1} \condE{\rnd{R}^\rowvar_\ell(i)}{\cH_\ell} \I{\cF_\ell}}
  & \leq \frac{384}{\mu^2 \bar{\Delta}^\rowvar_i} \log n + 1\,, \\
  \E{\sum_{\ell = 0}^{n - 1} \condE{\rnd{R}^\colvar_\ell(j)}{\cH_\ell} \I{\cF_\ell}}
  & \leq \frac{384}{\mu^2 \bar{\Delta}^\colvar_j} \log n + 1\,,
\end{align*}
for any row $i \in [K]$ and column $j \in [L]$. Finally, we apply the above upper bounds to \eqref{eq:component regret} and get our main claim.
\end{proof}


\subsection{Lower Bound}
\label{sec:lower bound}

We derive a gap-dependent lower bound on the family of rank-$1$ bandits where $P_\rowvar$ and $P_\colvar$ are products of independent Bernoulli variables, which are parameterized by their means $\bar{u}$ and $\bar{v}$, respectively. The lower bound is derived for any \emph{uniformly efficient algorithm} $\mathcal{A}$, which is any algorithm such that for any $(\bar{u}, \bar{v}) \in [0, 1]^K \times [0, 1]^L$ and any $\alpha \in (0, 1)$, $R(n) = o(n^\alpha)$.

\begin{theorem}
\label{thm:lower bound} For any problem $(\bar{u}, \bar{v}) \in [0, 1]^K \times [0, 1]^L$ with a unique best arm and any uniformly efficient algorithm $\mathcal{A}$ whose regret is $R(n)$,
\begin{align*}
  \liminf_{n \to \infty} \frac{R(n)}{\log n}
  \geq {} & \sum_{i \in [K] \setminus \set{i^\ast}} \frac{\bar{u}(i^\ast) \bar{v}(j^\ast) - \bar{u}(i) \bar{v}(j^\ast)}
  {d(\bar{u}(i) \bar{v}(j^\ast), \bar{u}(i^\ast) \bar{v}(j^\ast))} + {} \\
  & \sum_{j \in [L] \setminus \set{j^\ast}} \frac{\bar{u}(i^\ast) \bar{v}(j^\ast) - \bar{u}(i^\ast) \bar{v}(j)}
  {d(\bar{u}(i^\ast) \bar{v}(j), \bar{u}(i^\ast) \bar{v}(j^\ast))}\,,
\end{align*}
where $d(p, q)$ is the \emph{Kullback-Leibler (KL) divergence} between Bernoulli random variables with means $p$ and $q$.
\end{theorem}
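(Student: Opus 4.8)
The plan is a change-of-measure lower bound in the spirit of Lai--Robbins and Graves--Lai, exploiting the rank-$1$ structure. The starting point is that a play of arm $(i,j)$ returns $\rnd{u}_t(i)\rnd{v}_t(j)$, a product of two independent Bernoullis, hence itself Bernoulli with mean $\bar{u}(i)\bar{v}(j)$, and these observations are i.i.d.\ across the plays of $(i,j)$; so the learner faces a $KL$-armed Bernoulli bandit whose mean vector is constrained to be an outer product. I will use the standard information inequality: for any two such instances $\theta,\theta'$ and any event $\mathcal{E}$ in the $\sigma$-algebra of the first $n$ rounds, $\sum_{i,j}\mathbb{E}_\theta[N_{i,j}(n)]\,d(\bar{u}(i)\bar{v}(j),\bar{u}'(i)\bar{v}'(j))\ge d(\mathbb{P}_\theta(\mathcal{E}),\mathbb{P}_{\theta'}(\mathcal{E}))$, where $N_{i,j}(n)$ counts plays of $(i,j)$ and I write $N^{\rowvar}_i(n)=\sum_j N_{i,j}(n)$, $N^{\colvar}_j(n)=\sum_i N_{i,j}(n)$. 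A unique best arm forces $\bar{u}(i^\ast),\bar{v}(j^\ast)>0$ and a unique best row and column; I treat the generic case $\bar{u}\in(0,1)^K$, $\bar{v}\in(0,1)^L$, the boundary cases making at most some $d(\cdot,\cdot)$ infinite, in which case the corresponding summand in the claimed bound is $0$ and is dropped.

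\textbf{Exact regret decomposition.} Writing $\bar{u}(i)=\bar{u}(i^\ast)-\Delta^{\rowvar}_i$ and $\bar{v}(j)=\bar{v}(j^\ast)-\Delta^{\colvar}_j$ and expanding, one gets the identity $\bar{u}(i^\ast)\bar{v}(j^\ast)-\bar{u}(i)\bar{v}(j)=\bar{v}(j)\,\Delta^{\rowvar}_i+\bar{u}(i^\ast)\,\Delta^{\colvar}_j$, hence $R(n)=\sum_i\Delta^{\rowvar}_i\,\mathbb{E}_\theta\big[\sum_j\bar{v}(j)N_{i,j}(n)\big]+\bar{u}(i^\ast)\sum_j\Delta^{\colvar}_j\,\mathbb{E}_\theta[N^{\colvar}_j(n)]$, in which only the $i\neq i^\ast$ and $j\neq j^\ast$ terms survive. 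Keeping the weight $\bar{v}(j)$ on $\Delta^{\rowvar}_i$ (rather than the cruder $\bar{v}(j^\ast)$) is what makes this an identity instead of an inequality, and it is precisely the quantity the information inequality will control; this is what lets the two halves of the bound add up without the factor-$2$ loss that a naive ``row-only plus column-only'' argument suffers. (I will also use $\bar{u}(i^\ast)\bar{v}(j^\ast)-\bar{u}(i)\bar{v}(j)\ge\bar{v}(j^\ast)\Delta^{\rowvar}_i\ge 0$ and $\ge\bar{u}(i^\ast)\Delta^{\colvar}_j\ge 0$, which with uniform efficiency give $\mathbb{E}_\theta[N^{\rowvar}_i(n)]=o(n^\alpha)$ and $\mathbb{E}_\theta[N^{\colvar}_j(n)]=o(n^\alpha)$ for suboptimal $i,j$.)

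\textbf{Change of measure, and the main obstacle.} Fix $i\neq i^\ast$ and let $\theta'$ equal $\theta$ except $\bar{u}'(i)=\bar{u}(i^\ast)+\epsilon$; then only the distributions of arms in row $i$ change, and $(i,j^\ast)$ is the unique optimal arm of $\theta'$. Take $\mathcal{E}=\{N^{\rowvar}_i(n)\ge n/2\}$: uniform efficiency at $\theta$ forces $\mathbb{P}_\theta(\mathcal{E})\to0$ (Markov plus $\mathbb{E}_\theta[N^{\rowvar}_i(n)]=o(n^\alpha)$), and uniform efficiency at $\theta'$ forces $\mathbb{P}_{\theta'}(\mathcal{E}^c)=o(n^{\alpha-1})$ for every $\alpha\in(0,1)$, so by the elementary bound $d(x,y)\ge(1-x)\log\frac{1}{1-y}-\log 2$ the right-hand side of the information inequality is at least $(1-o(1))(1-\alpha)\log n$. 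Its left-hand side collapses to $\sum_j\mathbb{E}_\theta[N_{i,j}(n)]\,d(\bar{u}(i)\bar{v}(j),\bar{u}'(i)\bar{v}(j))$, and the step I expect to be the real obstacle is converting this into a bound on $\mathbb{E}_\theta\big[\sum_j\bar{v}(j)N_{i,j}(n)\big]$. I will do this with a scaling lemma for the Bernoulli KL: for $0\le p<q$ the function $f(s)=d(ps,qs)$ has $f(0)=0$, $f''(s)=(q-p)^2/[(1-qs)^2(1-ps)]\ge0$, and $f'(0)=q\,(t\log t+1-t)\ge0$ with $t=p/q$; hence $f$ is non-decreasing and $s\mapsto f(s)/s$ is non-decreasing on the domain. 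Applied with $p=\bar{u}(i)$, $q=\bar{u}'(i)$, $s=\bar{v}(j)\le\bar{v}(j^\ast)$ this gives $d(\bar{u}(i)\bar{v}(j),\bar{u}'(i)\bar{v}(j))\le \frac{\bar{v}(j)}{\bar{v}(j^\ast)}\,d(\bar{u}(i)\bar{v}(j^\ast),\bar{u}'(i)\bar{v}(j^\ast))$, so the information inequality becomes $\frac{d(\bar{u}(i)\bar{v}(j^\ast),\bar{u}'(i)\bar{v}(j^\ast))}{\bar{v}(j^\ast)}\,\mathbb{E}_\theta\big[\sum_j\bar{v}(j)N_{i,j}(n)\big]\ge(1-o(1))(1-\alpha)\log n$.

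\textbf{Assembly.} Dividing by $\log n$, letting $n\to\infty$, then $\alpha\to0$, then $\epsilon\to0$ (continuity of $d$) gives $\liminf_n\frac{1}{\log n}\mathbb{E}_\theta\big[\sum_j\bar{v}(j)N_{i,j}(n)\big]\ge \bar{v}(j^\ast)/d(\bar{u}(i)\bar{v}(j^\ast),\bar{u}(i^\ast)\bar{v}(j^\ast))$ for each $i\neq i^\ast$. The symmetric construction---perturb $\bar{v}(j)$ up to $\bar{v}(j^\ast)+\epsilon$, changing only column $j$ and making $(i^\ast,j)$ uniquely optimal, where the plain monotonicity of $f$ with $s=\bar{u}(i)\le\bar{u}(i^\ast)$ suffices---gives $\liminf_n\frac{1}{\log n}\mathbb{E}_\theta[N^{\colvar}_j(n)]\ge 1/d(\bar{u}(i^\ast)\bar{v}(j),\bar{u}(i^\ast)\bar{v}(j^\ast))$ for each $j\neq j^\ast$. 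Plugging both into the regret identity of the second step and using that the $\liminf$ of a sum of nonnegative sequences is at least the sum of their $\liminf$s yields exactly the claimed bound, since $\bar{v}(j^\ast)\Delta^{\rowvar}_i=\bar{u}(i^\ast)\bar{v}(j^\ast)-\bar{u}(i)\bar{v}(j^\ast)$ and $\bar{u}(i^\ast)\Delta^{\colvar}_j=\bar{u}(i^\ast)\bar{v}(j^\ast)-\bar{u}(i^\ast)\bar{v}(j)$. What remains is routine: checking that each $\theta'$ is a legitimate rank-$1$ instance with a unique best arm (so uniform efficiency applies to it), handling the boundary cases flagged above, and the standard Bernoulli-KL inequality used for the $(1-\alpha)\log n$ rate.
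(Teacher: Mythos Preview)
Your proof is correct and uses the same core ingredients as the paper---single-coordinate perturbations of $\bar{u}$ (resp.\ $\bar{v}$), Kaufmann-style information inequalities, and the scaling lemma that $s\mapsto d(ps,qs)$ is increasing and convex (so $s\mapsto d(ps,qs)/s$ is nondecreasing)---but packages them differently. The paper proceeds via the Graves--Lai optimization problem: it writes $\liminf R(n)/\log n \ge f(\bar{u},\bar{v})$ where $f$ is an LP over pull allocations $c_{i,j}$ with one constraint per alternative model, restricts to the single-coordinate perturbations $B_\rowvar\cup B_\colvar$, lets $\epsilon\to 0$, guesses that the optimal $c^\ast$ is supported on the optimal row and column, and proves optimality by a mass-redistribution argument (if $c_{i_0,j_0}>0$ with $i_0\neq i^\ast$, $j_0\neq j^\ast$, shift that mass to $c_{i_0,j^\ast}$ and $c_{i^\ast,j_0}$ and show the objective drops, using exactly the convexity and monotonicity of $s\mapsto d(ps,qs)$). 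Your asymmetric regret identity $\bar{u}(i^\ast)\bar{v}(j^\ast)-\bar{u}(i)\bar{v}(j)=\bar{v}(j)\Delta^\rowvar_i+\bar{u}(i^\ast)\Delta^\colvar_j$ is the direct substitute for this LP step: it produces precisely the $\bar{v}(j)$-weighted row sum that the convexity inequality $d(\bar{u}(i)\bar{v}(j),\bar{u}'(i)\bar{v}(j))\le\frac{\bar{v}(j)}{\bar{v}(j^\ast)}d(\bar{u}(i)\bar{v}(j^\ast),\bar{u}'(i)\bar{v}(j^\ast))$ controls, and the unweighted column sum that plain monotonicity controls, so the two halves add without loss. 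In effect, your decomposition is the dual certificate of the paper's LP. The tradeoff: your route is shorter and avoids the optimization formalism; the paper's route makes explicit that the lower bound is the value of a relaxed Graves--Lai program and that the optimal allocation lives on the optimal row and column, which is informative in its own right.
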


The lower bound involves two terms. The first term is the regret due to learning the optimal row $i^\ast$, while playing the optimal column $j^\ast$. The second term is the regret due to learning the optimal column $j^\ast$, while playing the optimal row $i^\ast$. We do not know whether this lower bound is tight. We discuss its tightness in \cref{sec:discussion}.

\begin{proof}
The proof is based on the change-of-measure techniques from Kaufmann \etal~\cite{kaufmann16complexity} and Lagree \etal~\cite{lagree16multipleplay}, who ultimately build on Graves and Lai \cite{graves97asymptotically}. Let
\begin{align*}
  \textstyle
  w^\ast(\bar{u}, \bar{v}) = \max_{(i, j) \in [K] \times [L]} \bar{u}(i) \bar{v}(j)
\end{align*}
be the maximum reward in model $(\bar{u}, \bar{v})$. We consider the set of models where $\bar{u}(i^\ast)$ and $\bar{v}(j^\ast)$ remain the same, but the optimal arm changes,
\begin{align*}
  B(\bar{u}, \bar{v}) =
  \{&(\bar{u}', \bar{v}') \in [0, 1]^K \times [0, 1]^L: \bar{u}(i^\ast) = \bar{u}'(i^\ast), \\
  &\bar{v}(j^\ast) = \bar{v}'(j^\ast), \ w^\ast(\bar{u}, \bar{v}) < w^\ast(\bar{u}', \bar{v}')\}\,.
\end{align*}
By Theorem 17 of Kaufmann \etal~\cite{kaufmann16complexity},
\begin{align*}
  \liminf_{n \to \infty} \frac{\displaystyle \sum_{i = 1}^K \sum_{j = 1}^L
  \E{\rnd{T}_n(i, j)} d(\bar{u}(i) \bar{v}(j), \bar{u}'(i) \bar{v}'(j))}{\log n} \geq 1
\end{align*}
for any $(\bar{u}', \bar{v}') \in B(\bar{u}, \bar{v})$, where $\E{\rnd{T}_n(i, j)}$ is the expected number of times that arm $(i, j)$ is chosen in $n$ steps in problem $(\bar{u}, \bar{v})$. From this and the regret decomposition
\begin{align*}
  \textstyle
  R(n) = \sum_{i = 1}^K \sum_{j = 1}^L \E{\rnd{T}_n(i, j)} (\bar{u}(i^*) \bar{v}(j^*) - \bar{u}(i) \bar{v}(j))\,,
\end{align*}
we get that
\begin{align*}
  \liminf_{n \to \infty} \frac{R(n)}{\log n} \geq f(\bar{u}, \bar{v})\,,
\end{align*}
where
\begin{align*}
  f(\bar{u}, \bar{v})
  = \inf_{c \in \Theta} & \ \ \sum_{i = 1}^K \sum_{j = 1}^L
  (\bar{u}(i^\ast) \bar{v}(j^\ast) - \bar{u}(i) \bar{v}(j)) c_{i, j} \\
  \text{s.t.} & \ \ \ \forall (\bar{u}', \bar{v}') \in B(\bar{u}, \bar{v}): \\
  & \ \ \sum_{i = 1}^K \sum_{j = 1}^L d(\bar{u}(i) \bar{v}(j), \bar{u}'(i) \bar{v}'(j)) c_{i, j} \geq 1
\end{align*}
and $\Theta = [0, \infty)^{K \times L}$. To obtain our lower bound, we carefully relax the constraints of the above problem, so that we do not loose much in the bound. The details are presented in \cref{sec:LBdetails}. In the relaxed problem, only $K + L - 1$ entries in the optimal solution $c^\ast$ are non-zero, as in Combes \etal~\cite{combes15learning}, and they are
\begin{align*}
  c^\ast_{i, j} =
  \begin{cases}
    1 / d(\bar{u}(i) \bar{v}(j^\ast), \bar{u}(i^\ast) \bar{v}(j^\ast))\,, & j = j^\ast, i \ne i^\ast\,; \\
    1 / d(\bar{u}(i^\ast) \bar{v}(j), \bar{u}(i^\ast) \bar{v}(j^\ast))\,, & i = i^\ast, j \ne j^\ast \,; \\
    0\,, & \text{otherwise.}
  \end{cases}
\end{align*}
Now we substitute $c^\ast$ into the objective of the above problem and get our lower bound.
\end{proof}


\subsection{Discussion}
\label{sec:discussion}

We derive a gap-dependent upper bound on the $n$-step regret of $\bilinucb$ in \cref{thm:upper bound}, which is
\begin{align*}
  O((K + L) (1 / \mu^2) (1 / \Delta) \log n)\,,
\end{align*}
where $K$ denotes the number of rows, $L$ denotes the number of columns, $\Delta = \min \set{\Delta^\rowvar_{\min}, \Delta^\colvar_{\min}}$ is the minimum of the row and column gaps in \eqref{eq:minimum gaps}, and $\mu$ is the minimum of the average of entries in $\bar{u}$ and $\bar{v}$, as defined in \eqref{eq:average reward}.

We argue that our upper bound is nearly tight on the following class of problems. The $i$-th entry of $\rnd{u}_t$, $\rnd{u}_t(i)$, is an independent Bernoulli variable with mean
\begin{align*}
  \bar{u}(i) = p_\rowvar + \Delta_\rowvar \I{i = 1}
\end{align*}
for some $p_\rowvar \in [0, 1]$ and row gap $\Delta_\rowvar \in (0, 1 - p_\rowvar]$. The $j$-th entry of $\rnd{v}_t$, $\rnd{v}_t(j)$, is an independent Bernoulli variable with mean
\begin{align*}
  \bar{v}(j) = p_\colvar + \Delta_\colvar \I{j = 1}
\end{align*}
for $p_\colvar \in [0, 1]$ and column gap $\Delta_\colvar \in (0, 1 - p_\colvar]$. Note that the optimal arm is $(1, 1)$ and that the expected reward for choosing it is $(p_\rowvar + \Delta_\rowvar) (p_\colvar + \Delta_\colvar)$. We refer to the instance of this problem by $B_\textsc{spike}(K, L, p_\rowvar, p_\colvar, \Delta_\rowvar, \Delta_\colvar)$; and parameterize it by $K$, $L$, $p_\rowvar$, $p_\colvar$, $\Delta_\rowvar$, and $\Delta_\colvar$.

Let $p_\rowvar = 0.5 - \Delta_\rowvar$ for $\Delta_\rowvar \in [0, 0.25]$, and $p_\colvar = 0.5 - \Delta_\colvar$ for $\Delta_\colvar \in [0, 0.25]$. Then the upper bound in \cref{thm:upper bound} is
\begin{align*}
  O([K (1 / \Delta_\rowvar) + L (1 / \Delta_\colvar)] \log n)
\end{align*}
since $1 / \mu^2 \leq 1 / 0.25^2 = 16$. On the other hand, the lower bound in \cref{thm:lower bound} is
\begin{align*}
  \Omega([K (1 / \Delta_\rowvar) + L (1 / \Delta_\colvar)] \log n)
\end{align*}
since $d(p, q) \leq [q (1 - q)]^{-1} (p - q)^2$ and $q = 1 - q = 0.5$. Note that the bounds match in  $K$, $L$, the gaps, and $\log n$.

We conclude with the observation that $\bilinucb$ is suboptimal in problems where $\mu$ in \eqref{eq:average reward} is small. In particular, consider the above problem, and choose $\Delta_\rowvar = \Delta_\colvar = 0.5$ and $K = L$. In this problem, the regret of $\bilinucb$ is $O(K^3 \log n)$; because $\bilinucb$ eliminates $O(K)$ rows and columns with $O(1 / K)$ gaps, and the regret for choosing any suboptimal arm is $O(1)$. This is much higher than the regret of a naive solution by $\ucb$ in \cref{sec:naive solutions}, which would be $O(K^2 \log n)$. Note that the upper bound in \cref{thm:upper bound} is also $O(K^3 \log n)$. Therefore, it is not loose, and a new algorithm is necessary to improve over $\ucb$ in this particular problem.


\section{Experiments}
\label{sec:experiments}

\begin{table*}[t]
  \centering
  {\small
  \begin{tabular}{rrr} \hline
    $K$ & $L$ & Regret \\ \hline
    8 & 8 & $17491 \pm \phantom{0}384$ \\
    8 & 16 & $29628 \pm 1499$ \\
    8 & 32 & $50030 \pm 1931$ \\
    16 & 8 & $28862 \pm \phantom{0}585$ \\
    16 & 16 & $41823 \pm 1689$ \\
    16 & 32 & $62451 \pm 2268$ \\
    32 & 8 & $46156 \pm \phantom{0}806$ \\
    32 & 16 & $61992 \pm 2339$ \\
    32 & 32 & $85208 \pm 3546$ \\ \hline
    \vspace{-0.05in} \\
    \multicolumn{3}{c}{$p_\rowvar = p_\colvar = 0.7$, \ $\Delta_\rowvar = \Delta_\colvar = 0.2$}
  \end{tabular}
  }
  \hspace{0.2in}
  {\small
  \begin{tabular}{rrr} \hline
    $p_\rowvar$ & $p_\colvar$ & Regret \\ \hline
    0.700 & 0.700 & $17744 \pm \phantom{0}466$ \\
    0.700 & 0.350 & $23983 \pm \phantom{0}594$ \\
    0.700 & 0.175 & $24776 \pm 2333$ \\
    0.350 & 0.700 & $22963 \pm \phantom{0}205$ \\
    0.350 & 0.350 & $38373 \pm \phantom{00}71$ \\
    0.350 & 0.175 & $57401 \pm \phantom{00}68$ \\
    0.175 & 0.700 & $27440 \pm 2011$ \\
    0.175 & 0.350 & $57492 \pm \phantom{00}67$ \\
    0.175 & 0.175 & $95586 \pm \phantom{00}99$ \\ \hline
    \vspace{-0.05in} \\
    \multicolumn{3}{c}{$K = L = 8$, \ $\Delta_\rowvar = \Delta_\colvar = 0.2$}
  \end{tabular}
  }
  \hspace{0.2in}
  {\small
  \begin{tabular}{rrr} \hline
    $\Delta_\rowvar$ & $\Delta_\colvar$ & Regret \\ \hline
    0.20 & 0.20 & $17653 \pm \phantom{0}307$ \\
    0.20 & 0.10 & $22891 \pm \phantom{0}912$ \\
    0.20 & 0.05 & $30954 \pm \phantom{0}787$ \\
    0.10 & 0.20 & $20958 \pm \phantom{0}614$ \\
    0.10 & 0.10 & $33642 \pm 1089$ \\
    0.10 & 0.05 & $45511 \pm 3257$ \\
    0.05 & 0.20 & $30688 \pm \phantom{0}482$ \\
    0.05 & 0.10 & $44390 \pm 2542$ \\
    0.05 & 0.05 & $68412 \pm 2312$ \\ \hline
    \vspace{-0.05in} \\
    \multicolumn{3}{c}{$K = L = 8$, \ $p_\rowvar = p_\colvar = 0.7$}
  \end{tabular}
  }
  \caption{The $n$-step regret of $\bilinucb$ in $n = 2\text{M}$ steps as $K$ and $L$ increase (left), $p_\rowvar$ and $p_\colvar$ decrease (middle), and $\Delta_\rowvar$ and $\Delta_\colvar$ decrease (right). The results are averaged over $20$ runs.}
  \label{tab:scaling}
\end{table*}

\begin{figure*}[t]
  \centering
  \includegraphics[width=2.2in]{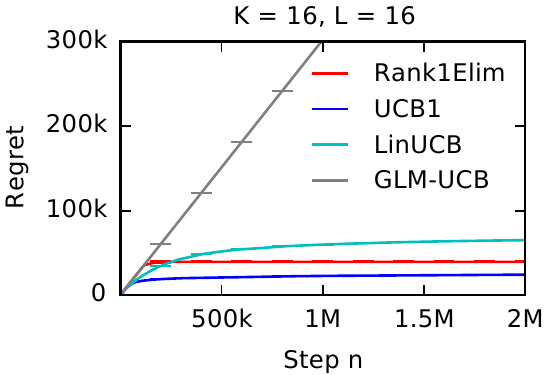}
  \includegraphics[width=2.2in]{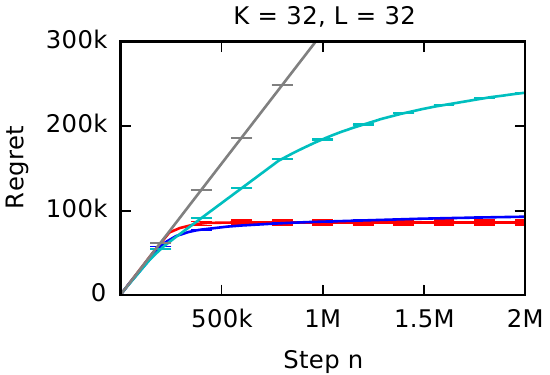}
  \includegraphics[width=2.2in]{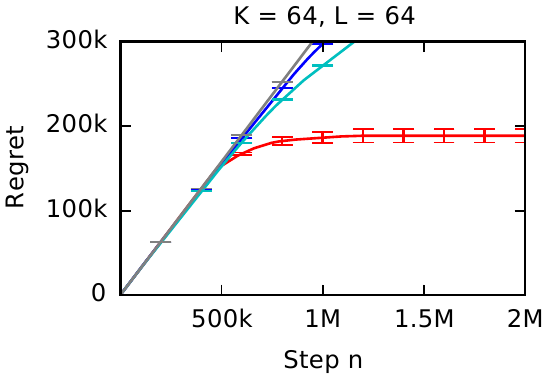}
  \caption{The $n$-step regret of $\bilinucb$, $\ucb$, $\linucb$, and $\glmucb$ on three synthetic problems in up to $n = 2\text{M}$ steps. The results are averaged over $20$ runs.}
  \label{fig:comparison}
\end{figure*}

We conduct three experiments. In \cref{sec:regret bound experiment}, we validate that the regret of $\bilinucb$ grows as suggested by \cref{thm:upper bound}. In \cref{sec:comparison experiment}, we compare $\bilinucb$ to three baselines. Finally, in \cref{sec:movielens experiment}, we evaluate $\bilinucb$ on a real-world problem where our modeling assumptions are violated.

\subsection{Regret Bound}
\label{sec:regret bound experiment}

The first experiment shows that the regret of $\bilinucb$ scales as suggested by our upper bound in \cref{thm:upper bound}. We experiment with the class of synthetic problems from \cref{sec:discussion}, $B_\textsc{spike}(K, L, p_\rowvar, p_\colvar, \Delta_\rowvar, \Delta_\colvar)$. We vary its parameters and report the $n$-step regret in $2$ million (M) steps.

\cref{tab:scaling} shows the $n$-step regret of $\bilinucb$ for various choices of $K$, $L$, $p_\rowvar$, $p_\colvar$, $\Delta_\rowvar$, and $\Delta_\colvar$. In each table, we vary two parameters and keep the rest fixed. We observe that the regret increases as $K$ and $L$ increase, and $\Delta_\rowvar$ and $\Delta_\colvar$ decrease; as suggested by \cref{thm:upper bound}. Specifically, the regret doubles when $K$ and $L$ are doubled, and when $\Delta_\rowvar$ and $\Delta_\colvar$ are halved. We also observe that the regret is not quadratic in $1 / \mu$, where $\mu \approx \min \set{p_\rowvar, p_\colvar}$. This indicates that the upper bound in \cref{thm:upper bound} is loose in $\mu$ when $\mu$ is bounded away from zero. We argue in \cref{sec:discussion} that this is not the case as $\mu \to 0$.

\subsection{Comparison to Alternative Solutions}
\label{sec:comparison experiment}

In the second experiment, we compare $\bilinucb$ to the three alternative methods in \cref{sec:naive solutions}: $\ucb$, $\linucb$, and $\glmucb$. The confidence radii of $\linucb$ and $\glmucb$ are set as suggested by Abbasi-Yadkori \etal~\cite{abbasi-yadkori11improved} and Filippi \etal~\cite{filippi10parametric}, respectively. The maximum-likelihood estimates of $\bar{u}$ and $\bar{v}$ in $\glmucb$ are computed using the online EM \cite{cappe09online}, which is observed to converge to $\bar{u}$ and $\bar{v}$ in our problems. We experiment with the problem from \cref{sec:regret bound experiment}, where $p_\rowvar = p_\colvar = 0.7$, $\Delta_\rowvar = \Delta_\colvar = 0.2$, and $K = L$.

Our results are reported in \cref{fig:comparison}. We observe that the regret of $\bilinucb$ flattens in all three problems, which indicates that $\bilinucb$ learns the optimal arm. When $K = 16$, $\ucb$ has a lower regret than $\bilinucb$. However, because the regret of $\ucb$ is $O(K L)$ and the regret of $\bilinucb$ is $O(K + L)$,  $\bilinucb$ can outperform $\ucb$ on larger problems. When $K = 32$, both algorithms already perform similarly; and when $K = 64$, $\bilinucb$ clearly outperforms $\ucb$. This shows that $\bilinucb$ can leverage the structure of our problem. Neither $\linucb$ nor $\glmucb$ are competitive on any of our problems.

We investigated the poor performance of both $\linucb$ and $\glmucb$. When the confidence radii of $\linucb$ are multiplied by $1 / 3$, $\linucb$ becomes competitive on all problems. When the confidence radii of $\glmucb$ are multiplied by $1 / 100$, $\glmucb$ is still not competitive on any of our problems. We conclude that $\linucb$ and $\glmucb$ perform poorly because their theory-suggested confidence intervals are too wide. In contrast, $\bilinucb$ is implemented with its theory-suggested intervals in all experiments.

\subsection{MovieLens Experiment}
\label{sec:movielens experiment}

\begin{figure*}[t]
  \centering
  \includegraphics[width=2.2in]{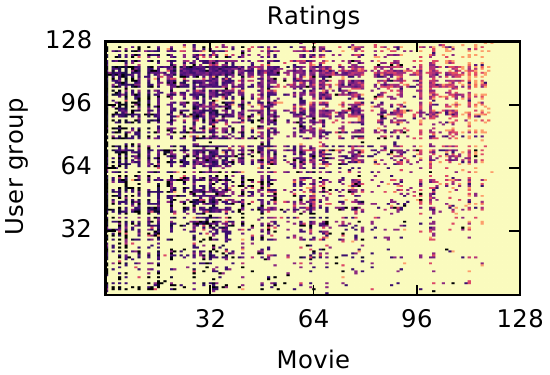}
  \includegraphics[width=2.2in]{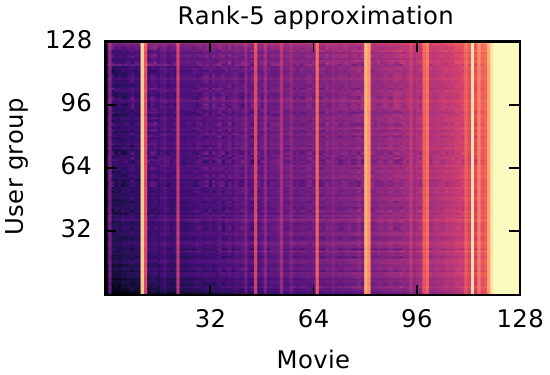}
  \includegraphics[width=2.2in]{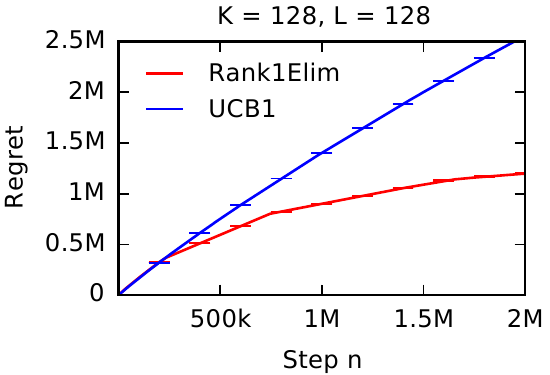} \\
  \hspace{0.35in} (a) \hspace{2in} (b) \hspace{2.05in} (c) \vspace{-0.05in}
  \caption{\textbf{a}. Ratings from the MovieLens dataset. The darker the color, the higher the rating. The rows and columns are ordered by their average ratings. The missing ratings are shown in yellow. \textbf{b}. Rank-$5$ approximation to the ratings. \textbf{c}. The $n$-step regret of $\bilinucb$ and $\ucb$ in up to $n = 2\text{M}$ steps.}
  \label{fig:movielens}
\end{figure*}

In our last experiment, we evaluate $\bilinucb$ on a recommendation problem. The goal is to identify the pair of a user group and movie that has the highest expected rating. We experiment with the \emph{MovieLens} dataset from February 2003 \cite{movielens}, where $6\text{k}$ users give $1\text{M}$ ratings to $4\text{k}$ movies.

Our learning problem is formulated as follows. We define a user group for every unique combination of gender, age group, and occupation in the MovieLens dataset. The total number of groups is $241$. For each user group and movie, we average the ratings of all users in that group that rated that movie, and learn a low-rank approximation to the underlying rating matrix by a state-of-the-art algorithm \cite{keshavan10matrix}. The algorithm automatically detects the rank of the matrix to be $5$. We randomly choose $K = 128$ user groups and $L = 128$ movies. We report the average ratings of these user groups and movies in \cref{fig:movielens}a, and the corresponding completed rating matrix in \cref{fig:movielens}b. The reward for choosing user group $i \in [K]$ and movie $j \in [L]$ is a categorical random variable over five-star ratings. We estimate its parameters based on the assumption that the ratings are normally distributed with a fixed variance, conditioned on the completed ratings. The expected rewards in this experiment are not rank $1$. Therefore, our model is misspecified and $\bilinucb$ has no guarantees on its performance.

Our results are reported in \cref{fig:movielens}c. We observe that the regret of $\bilinucb$ is concave in the number of steps $n$, and flattens. This indicates that $\bilinucb$ learns a near-optimal solution. This is possible because of the structure of our rating matrix. Although it is rank $5$, its first eigenvalue is an order of magnitude larger than the remaining four non-zero eigenvalues. This structure is not surprising because the ratings of items are often subject to significant \emph{user and item biases} \cite{koren09matrix}. Therefore, our rating matrix is nearly rank $1$, and $\bilinucb$ learns a good solution. Our theory cannot explain this result and we leave it for future work. Finally, we note that $\ucb$ explores throughout because our problem has more than $10\text{k}$ arms.


\section{Related Work}
\label{sec:related work}

Zhao \etal~\cite{zhao13interactive} proposed a bandit algorithm for low-rank matrix completion, where the posterior of latent item factors is approximated by its point estimate. This algorithm is not analyzed. Kawale \etal~\cite{kawale15efficient} proposed a Thompson sampling (TS) algorithm for low-rank matrix completion, where the posterior of low-rank matrices is approximated by particle filtering. A computationally-inefficient variant of the algorithm has $O((1 / \Delta^2) \log n)$ regret in rank-$1$ matrices. In contrast, note that $\bilinucb$ is computationally efficient and its $n$-step regret is $O((1 / \Delta) \log n)$.

The problem of learning to recommended in the bandit setting was studied in several recent papers. Valko \etal~\cite{valko14spectral} and Kocak \etal~\cite{kocak14spectral} proposed content-based recommendation algorithms, where the features of items are derived from a known similarity graph over the items. Gentile \etal~\cite{gentile14online} proposed an algorithm that clusters users based on their preferences, under the assumption that the features of items are known. Li \etal~\cite{li16collaborative} extended this algorithm to the clustering of items. Maillard \etal~\cite{maillard14latent} studied a multi-armed bandit problem where the arms are partitioned into latent groups. The problems in the last three papers  are a special form of low-rank matrix completion, where some rows are identical. In this work, we do not make any such assumptions, but our results are limited to rank $1$.

$\bilinucb$ is motivated by the structure of the position-based model \cite{craswell08experimental}. Lagree \etal~\cite{lagree16multipleplay} proposed a bandit algorithm for this model under the assumption that the examination probabilities of all positions are known. Online learning to rank in click models was studied in several recent papers \cite{kveton15cascading,combes15learning,kveton15combinatorial,katariya16dcm,li16contextual,zong16cascading}. In practice, the probability of clicking on an item depends on both the item and its position, and this work is a major step towards learning to rank from such heterogeneous effects.


\section{Conclusions}
\label{sec:conclusions}

In this work, we propose stochastic rank-$1$ bandits, a class of online learning problems where the goal is to learn the maximum entry of a rank-$1$ matrix. This problem is challenging because the reward is a product of latent random variables, which are not observed. We propose a practical algorithm for solving this problem, $\bilinucb$, and prove a gap-dependent upper bound on its regret. We also prove a nearly matching gap-dependent lower bound. Finally, we evaluate $\bilinucb$ empirically. In particular, we validate the scaling of its regret, compare it to baselines, and show that it learns high-quality solutions even when our modeling assumptions are mildly violated.

We conclude that $\bilinucb$ is a practical algorithm for finding the maximum entry of a stochastic rank-$1$ matrix. It is surprisingly competitive with various baselines (\cref{sec:comparison experiment}) and can be applied to higher-rank matrices (\cref{sec:movielens experiment}). On the other hand, we show that $\bilinucb$ can be suboptimal on relatively simple problems (\cref{sec:discussion}). We plan to address this issue in our future work. We note that our results can be generalized to other reward models, such as $\rnd{u}_t(i) \rnd{v}_t(j) \sim \mathcal{N}(\bar{u}(i) \bar{v}(j), \sigma)$ for $\sigma > 0$.


\subsubsection*{Acknowledgments}

This work was partially supported by NSERC and by the Alberta Innovates Technology Futures through the Alberta  Machine Intelligence Institute (AMII).

\bibliographystyle{plain}
\bibliography{References}

\begin{thebibliography}{10}

\bibitem{abbasi-yadkori11improved}
Yasin Abbasi-Yadkori, David Pal, and Csaba Szepesvari.
\newblock Improved algorithms for linear stochastic bandits.
\newblock In {\em Advances in Neural Information Processing Systems 24}, pages
  2312--2320, 2011.

\bibitem{auer02finitetime}
Peter Auer, Nicolo Cesa-Bianchi, and Paul Fischer.
\newblock Finite-time analysis of the multiarmed bandit problem.
\newblock {\em Machine Learning}, 47:235--256, 2002.

\bibitem{auer10ucb}
Peter Auer and Ronald Ortner.
\newblock {UCB} revisited: Improved regret bounds for the stochastic
  multi-armed bandit problem.
\newblock {\em Periodica Mathematica Hungarica}, 61(1-2):55--65, 2010.

\bibitem{cappe09online}
Olivier Cappe and Eric Moulines.
\newblock Online {EM} algorithm for latent data models.
\newblock {\em Journal of the Royal Statistical Society Series B},
  71(3):593--613, 2009.

\bibitem{chuklin15click}
Aleksandr Chuklin, Ilya Markov, and Maarten de~Rijke.
\newblock {\em Click Models for Web Search}.
\newblock Morgan \& Claypool Publishers, 2015.

\bibitem{combes15learning}
Richard Combes, Stefan Magureanu, Alexandre Proutiere, and Cyrille Laroche.
\newblock Learning to rank: Regret lower bounds and efficient algorithms.
\newblock In {\em Proceedings of the 2015 ACM SIGMETRICS International
  Conference on Measurement and Modeling of Computer Systems}, 2015.

\bibitem{craswell08experimental}
Nick Craswell, Onno Zoeter, Michael Taylor, and Bill Ramsey.
\newblock An experimental comparison of click position-bias models.
\newblock In {\em Proceedings of the 1st ACM International Conference on Web
  Search and Data Mining}, pages 87--94, 2008.

\bibitem{dani08stochastic}
Varsha Dani, Thomas Hayes, and Sham Kakade.
\newblock Stochastic linear optimization under bandit feedback.
\newblock In {\em Proceedings of the 21st Annual Conference on Learning
  Theory}, pages 355--366, 2008.

\bibitem{filippi10parametric}
Sarah Filippi, Olivier Cappe, Aurelien Garivier, and Csaba Szepesvari.
\newblock Parametric bandits: The generalized linear case.
\newblock In {\em Advances in Neural Information Processing Systems 23}, pages
  586--594, 2010.

\bibitem{gentile14online}
Claudio Gentile, Shuai Li, and Giovanni Zappella.
\newblock Online clustering of bandits.
\newblock In {\em Proceedings of the 31st International Conference on Machine
  Learning}, pages 757--765, 2014.

\bibitem{graves97asymptotically}
Todd Graves and Tze~Leung Lai.
\newblock Asymptotically efficient adaptive choice of control laws in
  controlled {Markov} chains.
\newblock {\em SIAM Journal on Control and Optimization}, 35(3):715--743, 1997.

\bibitem{katariya16dcm}
Sumeet Katariya, Branislav Kveton, Csaba Szepesvari, and Zheng Wen.
\newblock {DCM} bandits: Learning to rank with multiple clicks.
\newblock In {\em Proceedings of the 33rd International Conference on Machine
  Learning}, pages 1215--1224, 2016.

\bibitem{kaufmann16complexity}
Emilie Kaufmann, Olivier Cappe, and Aurelien Garivier.
\newblock On the complexity of best-arm identification in multi-armed bandit
  models.
\newblock {\em Journal of Machine Learning Research}, 17:1--42, 2016.

\bibitem{kawale15efficient}
Jaya Kawale, Hung Bui, Branislav Kveton, Long Tran-Thanh, and Sanjay Chawla.
\newblock Efficient {Thompson} sampling for online matrix-factorization
  recommendation.
\newblock In {\em Advances in Neural Information Processing Systems 28}, pages
  1297--1305, 2015.

\bibitem{keshavan10matrix}
Raghunandan Keshavan, Andrea Montanari, and Sewoong Oh.
\newblock Matrix completion from noisy entries.
\newblock {\em Journal of Machine Learning Research}, 11:2057--2078, 2010.

\bibitem{kocak14spectral}
Tomas Kocak, Michal Valko, Remi Munos, and Shipra Agrawal.
\newblock Spectral {Thompson} sampling.
\newblock In {\em Proceedings of the 28th AAAI Conference on Artificial
  Intelligence}, pages 1911--1917, 2014.

\bibitem{koren09matrix}
Yehuda Koren, Robert Bell, and Chris Volinsky.
\newblock Matrix factorization techniques for recommender systems.
\newblock {\em IEEE Computer}, 42(8):30--37, 2009.

\bibitem{kveton15cascading}
Branislav Kveton, Csaba Szepesvari, Zheng Wen, and Azin Ashkan.
\newblock Cascading bandits: Learning to rank in the cascade model.
\newblock In {\em Proceedings of the 32nd International Conference on Machine
  Learning}, 2015.

\bibitem{kveton15combinatorial}
Branislav Kveton, Zheng Wen, Azin Ashkan, and Csaba Szepesvari.
\newblock Combinatorial cascading bandits.
\newblock In {\em Advances in Neural Information Processing Systems 28}, pages
  1450--1458, 2015.

\bibitem{kveton15tight}
Branislav Kveton, Zheng Wen, Azin Ashkan, and Csaba Szepesvari.
\newblock Tight regret bounds for stochastic combinatorial semi-bandits.
\newblock In {\em Proceedings of the 18th International Conference on
  Artificial Intelligence and Statistics}, 2015.

\bibitem{lagree16multipleplay}
Paul Lagree, Claire Vernade, and Olivier Cappe.
\newblock Multiple-play bandits in the position-based model.
\newblock In {\em Advances in Neural Information Processing Systems 29}, pages
  1597--1605, 2016.

\bibitem{movielens}
Shyong Lam and Jon Herlocker.
\newblock {MovieLens Dataset}.
\newblock http://grouplens.org/datasets/movielens/, 2016.

\bibitem{li16collaborative}
Shuai Li, Alexandros Karatzoglou, and Claudio Gentile.
\newblock Collaborative filtering bandits.
\newblock In {\em Proceedings of the 39th Annual International ACM SIGIR
  Conference}, 2016.

\bibitem{li16contextual}
Shuai Li, Baoxiang Wang, Shengyu Zhang, and Wei Chen.
\newblock Contextual combinatorial cascading bandits.
\newblock In {\em Proceedings of the 33rd International Conference on Machine
  Learning}, pages 1245--1253, 2016.

\bibitem{maillard14latent}
Odalric-Ambrym Maillard and Shie Mannor.
\newblock Latent bandits.
\newblock In {\em Proceedings of the 31st International Conference on Machine
  Learning}, pages 136--144, 2014.

\bibitem{raginsky12concentration}
Maxim Raginsky and Igal Sason.
\newblock Concentration of measure inequalities in information theory,
  communications and coding.
\newblock {\em CoRR}, abs/1212.4663, 2012.

\bibitem{richardson07predicting}
Matthew Richardson, Ewa Dominowska, and Robert Ragno.
\newblock Predicting clicks: Estimating the click-through rate for new ads.
\newblock In {\em Proceedings of the 16th International Conference on World
  Wide Web}, pages 521--530, 2007.

\bibitem{valko14spectral}
Michal Valko, Remi Munos, Branislav Kveton, and Tomas Kocak.
\newblock Spectral bandits for smooth graph functions.
\newblock In {\em Proceedings of the 31st International Conference on Machine
  Learning}, pages 46--54, 2014.

\bibitem{zhao13interactive}
Xiaoxue Zhao, Weinan Zhang, and Jun Wang.
\newblock Interactive collaborative filtering.
\newblock In {\em Proceedings of the 22nd ACM International Conference on
  Information and Knowledge Management}, pages 1411--1420, 2013.

\bibitem{zong16cascading}
Shi Zong, Hao Ni, Kenny Sung, Nan~Rosemary Ke, Zheng Wen, and Branislav Kveton.
\newblock Cascading bandits for large-scale recommendation problems.
\newblock In {\em Proceedings of the 32nd Conference on Uncertainty in
  Artificial Intelligence}, 2016.

\end{thebibliography}


\clearpage
\onecolumn
\appendix

\section{Upper Bound}
\label{sec:lemmas}

\begin{lemma}
\label{lem:bad events} Let $\ccE$ be defined as in the proof of \cref{thm:upper bound}. Then
\begin{align*}
  P(\ccE) \leq
  \frac{2 (K + L)}{n}\,.
\end{align*}
\end{lemma}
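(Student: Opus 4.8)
The plan is to decompose $\ccE$ over stages and then over individual rows and columns, reduce each piece to the failure of a single confidence interval, and bound that failure by Azuma--Hoeffding. Write $\mathcal{V}^\rowvar_\ell$ for the event that $\bar{\rnd{u}}_\ell(i) \in [\rnd{L}^\rowvar_\ell(i), \rnd{U}^\rowvar_\ell(i)]$ for every $i \in \rnd{I}_\ell$, and $\mathcal{V}^\colvar_\ell$ for its column counterpart. The first step is to show $\ccE \subseteq \bigcup_{\ell \geq 0} (\overline{\mathcal{V}^\rowvar_\ell} \cup \overline{\mathcal{V}^\colvar_\ell})$, i.e.\ that on the event that all confidence intervals hold, the extra clause $\bar{\rnd{u}}_\ell(i) \geq \mu \bar{u}(i)$ (and its column analogue) in the definition of $\cE^\rowvar_\ell$ holds automatically. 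For this I would run a joint induction over stages: if all confidence intervals have held so far, then every elimination is correct, and in particular a row is only ever replaced by a weakly more rewarding one, since if row $i$ is eliminated in stage $t$ then $\bar{\rnd{u}}_t(\rnd{h}^\rowvar_t(i)) \leq \rnd{U}^\rowvar_t(\rnd{h}^\rowvar_t(i)) \leq \rnd{L}^\rowvar_t(\rnd{i}_t) \leq \bar{\rnd{u}}_t(\rnd{i}_t)$, and $\bar{\rnd{u}}_t(\cdot)$ equals $\bar{u}(\cdot)$ times a common factor that is strictly positive once we invoke the column part of the inductive hypothesis together with $\mu > 0$ (if $\mu = 0$, \cref{thm:upper bound} is vacuous); the column case is symmetric. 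Hence $\bar{v}(\rnd{h}^\colvar_t(j)) \geq \bar{v}(j)$ for all $t$ and $j$, so
\begin{align*}
  \bar{\rnd{u}}_\ell(i)
  & = \bar{u}(i) \sum_{t = 0}^\ell \frac{n_t - n_{t - 1}}{n_\ell} \cdot \frac{1}{L} \sum_{j = 1}^L \bar{v}(\rnd{h}^\colvar_t(j)) \\
  & \geq \bar{u}(i) \cdot \frac{1}{L} \sum_{j = 1}^L \bar{v}(j)
  \geq \mu \bar{u}(i)\,,
\end{align*}
as the weights $(n_t - n_{t-1})/n_\ell$ sum to $1$; symmetrically $\bar{\rnd{v}}_\ell(j) \geq \mu \bar{v}(j)$. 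Since $\bilinucb$ runs at most $n$ stages, this reduces the lemma to showing $P(\overline{\mathcal{V}^\rowvar_\ell}) \leq 2K/n^2$ and $P(\overline{\mathcal{V}^\colvar_\ell}) \leq 2L/n^2$ for each $\ell$.

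For the concentration step I would treat rows (columns are symmetric), fix $i \in [K]$ and $\ell$, bound $P(i \in \rnd{I}_\ell,\ | n_\ell^{-1} \sum_{j = 1}^L \rnd{C}^\rowvar_\ell(i,j) - \bar{\rnd{u}}_\ell(i) | > \sqrt{\log n / n_\ell})$, and union bound over $i$. On $\{i \in \rnd{I}_\ell\}$ the exploration schedule is deterministic and row $i$ has survived every earlier stage, so $\sum_{j = 1}^L \rnd{C}^\rowvar_\ell(i,j)$ is the sum of exactly $n_\ell$ observations $\rnd{X}_1, \dots, \rnd{X}_{n_\ell} \in [0,1]$, one per row-exploration round, with $\rnd{X}_k = \rnd{u}_s(i) \rnd{v}_s(\rnd{h}^\colvar_{t(k)}(\rnd{j}'_k))$ where $t(k)$ is the stage of round $k$, $\rnd{j}'_k$ the column drawn uniformly from $[L]$ in that round, and $s$ the corresponding time step. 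Let $\mathcal{G}_{k-1}$ be the history up to the \emph{start} of round $k$: then $\rnd{h}^\colvar_{t(k)}$ is $\mathcal{G}_{k-1}$-measurable (it is frozen at the start of stage $t(k)$), while $\rnd{j}'_k$ and the fresh draws $\rnd{u}_s, \rnd{v}_s$ are independent of $\mathcal{G}_{k-1}$, so $\condE{\rnd{X}_k}{\mathcal{G}_{k-1}} = \bar{u}(i) \, L^{-1} \sum_{j' = 1}^L \bar{v}(\rnd{h}^\colvar_{t(k)}(j'))$, and summing over $k \leq n_\ell$ recovers exactly $n_\ell \bar{\rnd{u}}_\ell(i)$. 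Thus $(\rnd{X}_k - \condE{\rnd{X}_k}{\mathcal{G}_{k-1}})_{k}$ is a martingale-difference sequence with increments in an interval of length $1$; stopping it at the (random) stage in which $i$ is eliminated makes it a martingale defined for all $k$, so Azuma--Hoeffding applied at the deterministic time $n_\ell$ bounds the probability of the event above by $2\exp(-2\log n) = 2/n^2$. Union bounding over $i \in [K]$ gives $P(\overline{\mathcal{V}^\rowvar_\ell}) \leq 2K/n^2$, the column case gives $2L/n^2$, and summing over the at most $n$ stages yields $P(\ccE) \leq 2(K+L)/n$.

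I expect the main obstacle to be the bookkeeping in this last step: verifying that the per-row observation sequence is genuinely a bounded martingale-difference sequence despite the adaptivity of the column-substitution map $\rnd{h}^\colvar_t$ and the random elimination of rows. The two facts that make this work are that the stage lengths $n_0, n_1 - n_0, \dots$ are deterministic, so that exactly $n_\ell$ observations contribute to $\rnd{C}^\rowvar_\ell(i, \cdot)$ on $\{i \in \rnd{I}_\ell\}$, and that $\rnd{h}^\colvar_t$ is measurable at the start of stage $t$, so that $\condE{\rnd{X}_k}{\mathcal{G}_{k-1}}$ is precisely the quantity appearing in the definition of $\bar{\rnd{u}}_\ell(i)$. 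A secondary subtlety is the mutual reference between row and column eliminations, which is why the monotonicity argument in the first step must be a joint induction over stages rather than two independent ones.
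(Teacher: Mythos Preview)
Your proof is correct and follows essentially the same approach as the paper: reduce $\ccE$ to confidence-interval failures by arguing that correct intervals force correct eliminations and hence the monotonicity $\bar{v}(\rnd{h}^\colvar_t(j)) \geq \bar{v}(j)$ needed for the $\mu$-lower bound, then bound each interval's failure probability by Azuma--Hoeffding on the martingale-difference sequence of per-round observations. Your packaging via the containment $\ccE \subseteq \bigcup_\ell (\overline{\mathcal{V}^\rowvar_\ell} \cup \overline{\mathcal{V}^\colvar_\ell})$ and joint induction is a slight reorganization of the paper's stage-by-stage peeling $P(\overline{\cE^\rowvar_\ell}, \cE^\colvar_0, \dots, \cE^\colvar_{\ell-1})$, and your explicit handling of the stopping-time issue and the deterministic stage lengths makes rigorous a point the paper treats informally.
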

\begin{proof}
Let $\cE_\ell = \cE^\rowvar_\ell \cap \cE^\colvar_\ell$. Then from the definition of $\ccE$,
\begin{align*}
  \ccE =
  \overline{\cE_0} \cup
  (\overline{\cE_1} \cap \cE_0) \cup \ldots \cup
  (\overline{\cE_{n - 1}} \cap \cE_{n - 2} \cap \ldots \cap \cE_0)\,,
\end{align*}
and from the definition of $\cE_\ell$,
\begin{align*}
  \overline{\cE_\ell} \cap \cE_{\ell - 1} \cap \ldots \cap \cE_0 =
  (\overline{\cE^\rowvar_\ell} \cap \cE_{\ell - 1} \cap \ldots \cap \cE_0) \cup
  (\overline{\cE^\colvar_\ell} \cap \cE_{\ell - 1} \cap \ldots \cap \cE_0)\,.
\end{align*}
It follows that the probability of event $\ccE$ is bounded as
\begin{align*}
  P(\ccE)
  & \leq \sum_{\ell = 0}^{n - 1} P(\overline{\cE^\rowvar_\ell}, \
  \cE^\rowvar_0, \ \dots, \ \cE^\rowvar_{\ell - 1}, \ \cE^\colvar_0, \ \dots, \ \cE^\colvar_{\ell - 1}) +
  \sum_{\ell = 0}^{n - 1} P(\overline{\cE^\colvar_\ell}, \
  \cE^\rowvar_0, \ \dots, \ \cE^\rowvar_{\ell - 1}, \ \cE^\colvar_0, \ \dots, \ \cE^\colvar_{\ell - 1}) \\
  & \leq \sum_{\ell = 0}^{n - 1} P(\overline{\cE^\rowvar_\ell}, \ \cE^\colvar_0, \ \dots, \ \cE^\colvar_{\ell - 1}) +
  \sum_{\ell = 0}^{n - 1} P(\overline{\cE^\colvar_\ell}, \ \cE^\rowvar_0, \ \dots, \ \cE^\rowvar_{\ell - 1})\,.  
\end{align*}
From the definition of $\overline{\cE^\rowvar_\ell}$, it follows that
\begin{align*}
  P(\overline{\cE^\rowvar_\ell}, \ \cE^\colvar_0, \ \dots, \ \cE^\colvar_{\ell - 1})
  \leq {} & P(\exists i \in \rnd{I}_\ell \text{ s.t. }
  \bar{\rnd{u}}_\ell(i) \notin [\rnd{L}^\rowvar_\ell(i), \rnd{U}^\rowvar_\ell(i)]) + {} \\
  & P(\exists i \in \rnd{I}_\ell \text{ s.t. }
  \bar{\rnd{u}}_\ell(i) < \mu \bar{u}(i), \ \cE^\colvar_0, \ \dots, \ \cE^\colvar_{\ell - 1})\,.
\end{align*}
Now we bound the probability of the above two events. The probability $P(\overline{\cE^\colvar_\ell}, \ \cE^\rowvar_0, \ \dots, \ \cE^\rowvar_{\ell - 1})$ can be bounded similarly and we omit this proof.

\vspace{0.1in}

\textbf{Event $1$:}
$\exists i \in \rnd{I}_\ell \text{ s.t. } \bar{\rnd{u}}_\ell(i) \notin [\rnd{L}^\rowvar_\ell(i), \rnd{U}^\rowvar_\ell(i)]$

Fix any $i \in \rnd{I}_\ell$. Let $\rnd{c}_k$ be the $k$-th observation of row $i$ in the row exploration stage of $\bilinucb$ and $\ell(k)$ be the index of that stage. Then
\begin{align*}
  \left(\rnd{c}_k - \bar{u}(i) \sum_{j = 1}^L \frac{\bar{v}(\rnd{h}^\colvar_{\ell(k)}(j))}{L}\right)_{k = 1}^n
\end{align*}
is a martingale difference sequence with respect to history $\rnd{h}^\colvar_0, \dots, \rnd{h}^\colvar_{\ell(k)}$ in step $k$. This follows from the observation that
\begin{align*}
  \condE{\rnd{c}_k}{\rnd{h}^\colvar_0, \dots, \rnd{h}^\colvar_{\ell(k)}} =
  \bar{u}(i) \sum_{j = 1}^L \frac{\bar{v}(\rnd{h}^\colvar_{\ell(k)}(j))}{L}\,,
\end{align*}
because column $j \in [L]$ in stage $\ell(k)$ is chosen randomly and then mapped to at least as rewarding column $\rnd{h}^\colvar_{\ell(k)}(j)$. By the definition of our sequence and from the Azuma-Hoeffding inequality (Remark 2.2.1 of Raginsky and Sason \cite{raginsky12concentration}),
\begin{align*}
  P(\bar{\rnd{u}}_\ell(i) \notin [\rnd{L}^\rowvar_\ell(i), \rnd{U}^\rowvar_\ell(i)])
  & = P\left(\Bigg|\frac{1}{n_\ell} \sum_{j = 1}^L \rnd{C}^\rowvar_\ell(i, j) -
  \bar{\rnd{u}}_\ell(i)\Bigg| > \sqrt{\frac{\log n}{n_\ell}}\right) \\
  & = P\left(\Bigg|\sum_{k = 1}^{n_\ell} \Bigg[\rnd{c}_k -
  \bar{u}(i) \sum_{j = 1}^L \frac{\bar{v}(\rnd{h}^\colvar_{\ell(k)}(j))}{L}\Bigg]\Bigg| > \sqrt{n_\ell \log n}\right) \\
  & \leq 2 \exp[-2 \log n] \\
  & = 2 n^{-2}
\end{align*}
for any stage $\ell$. By the union bound,
\begin{align*}
  P(\exists i \in \rnd{I}_\ell \text{ s.t. }
  \bar{\rnd{u}}_\ell(i) \notin [\rnd{L}^\rowvar_\ell(i), \rnd{U}^\rowvar_\ell(i)]) \leq
  2 K n^{-2}
\end{align*}
for any stage $\ell$.

\vspace{0.1in}

\textbf{Event $2$:}
$\exists i \in \rnd{I}_\ell \text{ s.t. } \bar{\rnd{u}}_\ell(i) < \mu \bar{u}(i), \ \cE^\colvar_0, \ \dots, \ \cE^\colvar_{\ell - 1}$

We claim that this event cannot happen. Fix any $i \in \rnd{I}_\ell$. When $\ell = 0$, we get that $\bar{\rnd{u}}_0(i) = \bar{u}(i) (1 / L) \sum_{j = 1}^L \bar{v}(j) \geq \mu \bar{u}(i)$ from the definitions of $\bar{\rnd{u}}_0(i)$ and $\mu$, and event $2$ obviously does not happen. When $\ell > 0$ and events $\cE^\colvar_0, \dots, \cE^\colvar_{\ell - 1}$ happen, any eliminated column $j$ up to stage $\ell$ is substituted with column $j'$ such that $\bar{v}(j') \geq \bar{v}(j)$, by the design of $\bilinucb$. From this fact and the definition of $\bar{\rnd{u}}_\ell(i)$, $\bar{\rnd{u}}_\ell(i) \geq \mu \bar{u}(i)$. Therefore, event $2$ does not happen when $\ell > 0$.

\vspace{0.1in}

\noindent \textbf{Total probability}

\noindent Finally, we sum all probabilities up and get that
\begin{align*}
  P(\ccE) \leq
  n \left(\frac{2 K}{n^2}\right) + n \left(\frac{2 L}{n^2}\right) \leq
  \frac{2 (K + L)}{n}\,.
\end{align*}
This concludes our proof.
\end{proof}

\begin{lemma}
\label{lem:elimination} Let event $\cE$ happen and $m$ be the first stage where $\tilde{\Delta}_m < \mu \Delta^\rowvar_i / 2$. Then row $i$ is guaranteed to be eliminated by the end of stage $m$. Moreover, let $m$ be the first stage where $\tilde{\Delta}_m < \mu \Delta^\colvar_j / 2$. Then column $j$ is guaranteed to be eliminated by the end of stage $m$.
\end{lemma}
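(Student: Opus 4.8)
The plan is to show that, under the good event $\cE$, the UCB of any suboptimal row $i$ still alive in stage $\ell$ falls below the largest LCB among the surviving rows as soon as the confidence radius $\sqrt{\log n / n_\ell}$ drops below $\mu \Delta^\rowvar_i / 4$; by the choice $n_\ell = \ceils{4 \tilde{\Delta}_\ell^{-2} \log n}$ this is guaranteed at the latest in stage $m$. The crucial structural fact, taken from the proof of \cref{thm:upper bound}, is that $\bar{\rnd{u}}_\ell(r) = S_\ell\, \bar{u}(r)$ with $S_\ell = \sum_{t = 0}^\ell \frac{n_t - n_{t - 1}}{n_\ell} \sum_{j = 1}^L \frac{\bar{v}(\rnd{h}^\colvar_t(j))}{L}$ \emph{the same for every row} $r$ (and symmetrically $\bar{\rnd{v}}_\ell(r) = S'_\ell\, \bar{v}(r)$ with $S'_\ell$ the same for every column); so a difference of expected rewards of surviving rows is the corresponding row gap scaled by the common factor $S_\ell$.

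First I would run a simultaneous induction over stages to establish that, under $\cE$, for every $\ell$: (a) $\rnd{r}^\ast_\ell := \rnd{h}^\rowvar_\ell(i^\ast)$ lies in $\rnd{I}_\ell$ with $\bar{u}(\rnd{r}^\ast_\ell) = \bar{u}(i^\ast)$, and symmetrically for columns; and (b) $S_\ell \geq \mu$ and $S'_\ell \geq \mu$. The heart of the inductive step for rows is that a surviving row $r \in \rnd{I}_\ell$ is eliminated in stage $\ell$ only if $\rnd{U}^\rowvar_\ell(r) \leq \rnd{L}^\rowvar_\ell(\rnd{i}_\ell)$, and since under $\cE$ the confidence intervals hold and $S_\ell > 0$,
\begin{equation*}
  S_\ell \bar{u}(r) = \bar{\rnd{u}}_\ell(r) \leq \rnd{U}^\rowvar_\ell(r) \leq \rnd{L}^\rowvar_\ell(\rnd{i}_\ell) \leq \bar{\rnd{u}}_\ell(\rnd{i}_\ell) = S_\ell \bar{u}(\rnd{i}_\ell)
  \ \Longrightarrow\ \bar{u}(r) \leq \bar{u}(\rnd{i}_\ell)\,,
\end{equation*}
so $i^\ast$ can only ever be remapped to an equally optimal row; the same holds for columns. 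Because eliminated columns are then replaced by at-least-as-good ones, $\bar{v}(\rnd{h}^\colvar_t(j)) \geq \bar{v}(j)$ for all $t \leq \ell$, which yields $S_\ell \geq \frac{1}{L} \sum_{j = 1}^L \bar{v}(j) \geq \mu$, and symmetrically $S'_\ell \geq \mu$.

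Next, $n_\ell \geq 4 \tilde{\Delta}_\ell^{-2} \log n$ gives $\sqrt{\log n / n_\ell} \leq \tilde{\Delta}_\ell / 2$, so under $\cE$ (where $\bar{\rnd{u}}_\ell(r) \in [\rnd{L}^\rowvar_\ell(r), \rnd{U}^\rowvar_\ell(r)]$) every surviving row obeys $\rnd{U}^\rowvar_\ell(r) \leq \bar{\rnd{u}}_\ell(r) + \tilde{\Delta}_\ell$ and $\rnd{L}^\rowvar_\ell(r) \geq \bar{\rnd{u}}_\ell(r) - \tilde{\Delta}_\ell$. Now fix a suboptimal row $i$ and suppose it is still alive at stage $m$, i.e.\ $\rnd{h}^\rowvar_m(i) = i \in \rnd{I}_m$. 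Using $\rnd{i}_m = \argmax_{r \in \rnd{I}_m} \rnd{L}^\rowvar_m(r)$ and $\rnd{r}^\ast_m \in \rnd{I}_m$,
\begin{align*}
  \rnd{L}^\rowvar_m(\rnd{i}_m) - \rnd{U}^\rowvar_m(i)
  & \geq \rnd{L}^\rowvar_m(\rnd{r}^\ast_m) - \rnd{U}^\rowvar_m(i)
  \geq \bar{\rnd{u}}_m(\rnd{r}^\ast_m) - \bar{\rnd{u}}_m(i) - 2 \tilde{\Delta}_m \\
  & = S_m \Delta^\rowvar_i - 2 \tilde{\Delta}_m
  \geq \mu \Delta^\rowvar_i - 2 \tilde{\Delta}_m > 0\,,
\end{align*}
where the last inequality uses $\tilde{\Delta}_m < \mu \Delta^\rowvar_i / 2$. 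Hence $\rnd{U}^\rowvar_m(\rnd{h}^\rowvar_m(i)) \leq \rnd{L}^\rowvar_m(\rnd{i}_m)$ and row $i$ is eliminated in stage $m$; if instead it was eliminated in an earlier stage there is nothing to prove. Either way $i$ is eliminated by the end of stage $m$. The claim for a suboptimal column $j$ follows by exchanging $\bar{u} \leftrightarrow \bar{v}$, $\rnd{h}^\rowvar \leftrightarrow \rnd{h}^\colvar$, $\cE^\rowvar \leftrightarrow \cE^\colvar$, $\rnd{I} \leftrightarrow \rnd{J}$, $S \leftrightarrow S'$, and $K \leftrightarrow L$.

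The main obstacle is the induction of the second paragraph: because $\bilinucb$ relabels eliminated rows and columns rather than deleting them, one must verify jointly that an optimal row stays in $\rnd{I}_\ell$ and an optimal column stays in $\rnd{J}_\ell$, and one must order the row and column updates within a stage correctly (since $S_\ell$ depends on the column representatives used during exploration and $S'_\ell$ on the row representatives). This is precisely what makes $S_\ell \geq \mu$, $S'_\ell \geq \mu$ hold and lets $\rnd{L}^\rowvar_\ell(\rnd{i}_\ell)$ be lower-bounded by $S_\ell \bar{u}(i^\ast) - \tilde{\Delta}_\ell$. Everything else is standard elimination-algorithm bookkeeping; the only nonstandard point is the shared factor $S_\ell$, which is exactly why the per-row gap $\Delta^\rowvar_i$ — rather than a gap between products $\bar{u}(i) \bar{v}(j)$ — governs elimination.
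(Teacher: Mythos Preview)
Your argument is correct and follows essentially the same route as the paper: both exploit that $\bar{\rnd{u}}_\ell(r) = S_\ell\,\bar{u}(r)$ with the common scaling $S_\ell \geq \mu$ (the paper calls this $q$), combine it with the confidence radius bound $2\sqrt{\log n / n_m} \leq \tilde{\Delta}_m < \mu \Delta^\rowvar_i / 2$, and conclude $\rnd{U}^\rowvar_m(i) \leq \rnd{L}^\rowvar_m(\rnd{i}_m)$ via the survival of an optimal row. Your induction establishing $S_\ell \geq \mu$ and that an optimal row remains in $\rnd{I}_\ell$ is more explicit than the paper's version, which simply asserts ``$i^\ast \in \rnd{I}_m$'' and invokes $\cE^\colvar_0,\dots,\cE^\colvar_{m-1}$ to get $q \geq \mu$, but the content is identical.
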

\begin{proof}
We only prove the first claim. The other claim is proved analogously.

Before we start, note that by the design of $\bilinucb$ and from the definition of $m$,
\begin{align}
  \tilde{\Delta}_m =
  2^{- m} <
  \frac{\mu \Delta^\rowvar_i}{2} \leq
  2^{- (m - 1)} =
  \tilde{\Delta}_{m - 1}\,.
  \label{eq:gap and tilde gap}
\end{align}
By the design of our confidence intervals,
\begin{align*}
  \frac{1}{n_m} \sum_{j = 1}^K \rnd{C}^\rowvar_m(i, j) + \sqrt{\frac{\log n}{n_m}}
  & \stackrel{\text{(a)}}{\leq} \bar{\rnd{u}}_m(i) + 2 \sqrt{\frac{\log n}{n_m}} \\
  & = \bar{\rnd{u}}_m(i) + 4 \sqrt{\frac{\log n}{n_m}} - 2 \sqrt{\frac{\log n}{n_m}} \\
  & \stackrel{\text{(b)}}{\leq} \bar{\rnd{u}}_m(i) + 2 \tilde{\Delta}_m - 2 \sqrt{\frac{\log n}{n_m}} \\
  & \stackrel{\text{(c)}}{\leq} \bar{\rnd{u}}_m(i) + \mu \Delta^\rowvar_i - 2 \sqrt{\frac{\log n}{n_m}} \\
  & = \bar{\rnd{u}}_m(i^\ast) + \mu \Delta^\rowvar_i -
  [\bar{\rnd{u}}_m(i^\ast) - \bar{\rnd{u}}_m(i)] - 2 \sqrt{\frac{\log n}{n_m}}\,,
\end{align*}
where inequality (a) is from $\rnd{L}^\rowvar_m(i) \leq \bar{\rnd{u}}_m(i)$, inequality (b) is from $n_m \geq 4 \tilde{\Delta}_m^{-2} \log n$, and inequality (c) is by \eqref{eq:gap and tilde gap}. Now note that
\begin{align*}
  \bar{\rnd{u}}_m(i^\ast) - \bar{\rnd{u}}_m(i) =
  q (\bar{u}(i^\ast) - \bar{u}(i)) \geq
  \mu \Delta^\rowvar_i
\end{align*}
for some $q \in [0, 1]$. The equality holds because $\bar{\rnd{u}}_m(i^\ast)$ and $\bar{\rnd{u}}_m(i)$ are estimated from the same sets of random columns. The inequality follows from the fact that events $\cE^\colvar_0, \dots, \cE^\colvar_{m - 1}$ happen. The events imply that any eliminated column $j$ up to stage $m$ is substituted with column $j'$ such that $\bar{v}(j') \geq \bar{v}(j)$, and thus $q \geq \mu$. From the above inequality, we get that
\begin{align*}
  \bar{\rnd{u}}_m(i^\ast) + \mu \Delta^\rowvar_i -
  [\bar{\rnd{u}}_m(i^\ast) - \bar{\rnd{u}}_m(i)] - 2 \sqrt{\frac{\log n}{n_m}} \leq
  \bar{\rnd{u}}_m(i^\ast) - 2 \sqrt{\frac{\log n}{n_m}}\,.
\end{align*}
Finally,
\begin{align*}
  \bar{\rnd{u}}_m(i^\ast) - 2 \sqrt{\frac{\log n}{n_m}}
  & \stackrel{\text{(a)}}{\leq} \frac{1}{n_m} \sum_{j = 1}^K \rnd{C}^\rowvar_m(i^\ast, j) - \sqrt{\frac{\log n}{n_m}} \\
  & \stackrel{\text{(b)}}{\leq} \frac{1}{n_m} \sum_{j = 1}^K \rnd{C}^\rowvar_m(\rnd{i}_m, j) - \sqrt{\frac{\log n}{n_m}}\,,
\end{align*}
where inequality (a) follows from $\bar{\rnd{u}}_m(i^\ast) \leq \rnd{U}^\rowvar_m(i^\ast)$ and inequality (b) follows from $\rnd{L}^\rowvar_m(i^\ast) \leq \rnd{L}^\rowvar_m(\rnd{i}_m)$, since $i^\ast \in \rnd{I}_m$ and $\rnd{i}_m = \argmax_{i \in \rnd{I}_m} \rnd{L}^\rowvar_m(i)$. Now we chain all inequalities and get our final claim.
\end{proof}

\begin{lemma}
\label{lem:row regret} The expected cumulative regret due to exploring any row $i \in [K]$ and any column $j \in [L]$ is bounded as
\begin{align*}
  \E{\sum_{\ell = 0}^{n - 1} \condE{\rnd{R}^\rowvar_\ell(i)}{\cH_\ell} \I{\cF_\ell}}
  & \leq \frac{384}{\mu^2 \bar{\Delta}^\rowvar_i} \log n + 1\,, \\
  \E{\sum_{\ell = 0}^{n - 1} \condE{\rnd{R}^\colvar_\ell(j)}{\cH_\ell} \I{\cF_\ell}}
  & \leq \frac{384}{\mu^2 \bar{\Delta}^\colvar_j} \log n + 1\,.
\end{align*}
\end{lemma}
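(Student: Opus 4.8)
The plan is to bound $\condE{\rnd R^\rowvar_\ell(i)}{\cH_\ell}\I{\cF_\ell}$ explicitly for every stage $\ell$, to observe that it vanishes once $\ell$ is large enough — with the cut-off determined by $\bar\Delta^\rowvar_i$ rather than by $\Delta^\rowvar_i$ — and then to sum the resulting near-geometric series over $\ell$. Only the row inequality needs to be shown; the column inequality follows by exchanging the roles of rows and columns. I will use $\rnd R^\rowvar_\ell(i)$, $\cH_\ell = (\rnd I_\ell, \rnd J_\ell)$, and the event $\cF_\ell$ exactly as in the proof of \cref{thm:upper bound}, together with $\tilde\Delta_\ell = 2^{-\ell}$, $\tilde\Delta_{-1} = 2$, and $n_\ell = \ceils{4\tilde\Delta_\ell^{-2}\log n}$.

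First I would write out the per-play regret. Each of the $n_\ell - n_{\ell-1}$ explorations of row $i$ in stage $\ell$ draws a column uniformly from $[L]$, maps it through $\rnd h^\colvar_\ell$ to an element of $\rnd J_\ell$, and plays row $i$ against it with a fresh, independent pair $\rnd u_t, \rnd v_t$. Writing $\bar v_\ell = \frac1L\sum_{j=1}^L \bar v(\rnd h^\colvar_\ell(j))$, the conditional expected regret of one such play equals $\bar u(i^\ast)\bar v(j^\ast) - \bar u(i)\bar v_\ell = \Delta^\rowvar_i\bar v(j^\ast) + \bar u(i)(\bar v(j^\ast) - \bar v_\ell)$, which I split into a \emph{row component} $\Delta^\rowvar_i\bar v(j^\ast) \le \Delta^\rowvar_i$ and a \emph{column component} $\bar u(i)\frac1L\sum_j \Delta^\colvar_{\rnd h^\colvar_\ell(j)}$, at most the average of the gaps of the columns in $\rnd J_\ell$. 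Multiplying by $n_\ell - n_{\ell-1}$ yields $\condE{\rnd R^\rowvar_\ell(i)}{\cH_\ell}$, which is $0$ when $i \notin \rnd I_\ell$.

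The crucial step is to exploit $\cF_\ell$. On $\cF_\ell$ every surviving row and column has gap at most $2\tilde\Delta_{\ell-1}/\mu$, so: (i) if $\Delta^\rowvar_i > 2\tilde\Delta_{\ell-1}/\mu$ then $i \notin \rnd I_\ell$ and hence $\rnd R^\rowvar_\ell(i) = 0$; (ii) since each $\rnd h^\colvar_\ell(j) \in \rnd J_\ell$, the column component is at most $2\tilde\Delta_{\ell-1}/\mu$, and it is \emph{exactly} $0$ whenever $2\tilde\Delta_{\ell-1}/\mu < \Delta^\colvar_{\min}$, because then no column of positive gap can survive and so $\bar v_\ell = \bar v(j^\ast)$. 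Let $M$ be the largest stage with $2\tilde\Delta_{M-1}/\mu \ge \bar\Delta^\rowvar_i$; this unifies both thresholds, since $\bar\Delta^\rowvar_i$ is $\Delta^\rowvar_i$ when that is positive and $\Delta^\colvar_{\min}$ otherwise. Then $\condE{\rnd R^\rowvar_\ell(i)}{\cH_\ell}\I{\cF_\ell} = 0$ for $\ell > M$, and for $0 \le \ell \le M$ it is at most $(n_\ell - n_{\ell-1})(\Delta^\rowvar_i + 2\tilde\Delta_{\ell-1}/\mu)$ (at $\ell = 0$ the column component is bounded by its trivial bound $1 \le 4/\mu = 2\tilde\Delta_{-1}/\mu$). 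This bound is deterministic, so the outer expectation is immediate.

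It then remains to evaluate $\sum_{\ell=0}^M (n_\ell - n_{\ell-1})(\Delta^\rowvar_i + 2\tilde\Delta_{\ell-1}/\mu)$. I would use the telescoping identity $\sum_{\ell=0}^M (n_\ell - n_{\ell-1}) = n_M$, the crude increment bound $n_\ell - n_{\ell-1} \le 4\cdot 4^\ell\log n + 1$, the geometric sum $\sum_{\ell=1}^M (n_\ell - n_{\ell-1})2^{-\ell} = O(2^M\log n)$, and the fact that $2^M \le 4/(\mu\bar\Delta^\rowvar_i)$, hence $n_M \le 64\log n/(\mu\bar\Delta^\rowvar_i)^2 + 1$. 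The row part $\Delta^\rowvar_i n_M$ and the column part $\frac2\mu\sum_\ell(n_\ell - n_{\ell-1})\tilde\Delta_{\ell-1}$ both evaluate to $O(\log n/(\mu^2\bar\Delta^\rowvar_i))$, and absorbing the lower-order remainders using $\mu \le 1$, $\bar\Delta^\rowvar_i \le 1$ and $\log n \ge 1$ produces the stated $\frac{384}{\mu^2\bar\Delta^\rowvar_i}\log n + 1$, the trailing $+1$ being exactly the ceiling slack of $n_M$ times $\Delta^\rowvar_i \le 1$. The main obstacle is step three: recognising that the column component of an \emph{optimal} row's regret collapses to $0$ the moment $2\tilde\Delta_{\ell-1}/\mu$ falls below $\Delta^\colvar_{\min}$. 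This is what turns the effective gap of an optimal row into $\Delta^\colvar_{\min}$ instead of a useless $0$, and it depends on $\bilinucb$ replacing each sampled column by the weakly more rewarding $\rnd h^\colvar_\ell(j)$, so that the residual per-step regret is literally an average of the surviving columns' gaps; once this is in place, the summation is routine.
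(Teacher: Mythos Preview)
Your proposal is correct and follows essentially the same approach as the paper: bound the per-stage regret by $(\Delta^\rowvar_i + \text{surviving column gap})(n_\ell - n_{\ell-1})$, use $\cF_\ell$ to cap the column gap by $2\tilde\Delta_{\ell-1}/\mu$, cut the sum at the first stage where this cap drops below the relevant gap, and sum the resulting geometric-type series. The only cosmetic difference is that the paper treats the suboptimal case ($\Delta^\rowvar_i>0$) and the optimal case ($\Delta^\rowvar_i=0$, with cut-off driven by $\Delta^\colvar_{\min}$) separately and rewrites the column bound as $2^{m-\ell+1}\Delta^\rowvar_i$, whereas you unify both cases from the outset via $\bar\Delta^\rowvar_i$ and a single cut-off $M$; this is a harmless (slightly cleaner) repackaging of the same argument.
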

\begin{proof}
We only prove the first claim. The other claim is proved analogously. This proof has two parts. In the first part, we assume that row $i$ is suboptimal, $\Delta^\rowvar_i > 0$. In the second part, we assume that row $i$ is optimal, $\Delta^\rowvar_i = 0$.

\vspace{0.1in}

\textbf{Row $i$ is suboptimal}

Let row $i$ be suboptimal and $m$ be the first stage where $\tilde{\Delta}_m < \mu \Delta^\rowvar_i / 2$. Then row $i$ is guaranteed to be eliminated by the end of stage $m$ (\cref{lem:elimination}), and thus
\begin{align*}
  \E{\sum_{\ell = 0}^{n - 1} \condE{\rnd{R}^\rowvar_\ell(i)}{\cH_\ell} \I{\cF_\ell}} \leq
  \E{\sum_{\ell = 0}^m \condE{\rnd{R}^\rowvar_\ell(i)}{\cH_\ell} \I{\cF_\ell}}\,.
\end{align*}
By \cref{lem:componentwise regret}, the expected regret of choosing row $i$ in stage $\ell$ can be bounded from above as
\begin{align*}
  \condE{\rnd{R}^\rowvar_\ell(i)}{\cH_\ell} \I{\cF_\ell} \leq
  (\Delta^\rowvar_i + \max_{j \in \rnd{J}_\ell} \Delta^\colvar_j) (n_\ell - n_{\ell - 1})\,,
\end{align*}
where $\max_{j \in \rnd{J}_\ell} \Delta^\colvar_j$ is the maximum column gap in stage $\ell$, $n_\ell$ is the number of steps by the end of stage $\ell$, and $n_{-1} = 0$. From the definition of $\mathcal{F}_\ell$ and $\tilde{\Delta}_\ell$, if column $j$ is not eliminated before stage $\ell$, we have that
\begin{align*}
  \Delta^\colvar_j \leq
  \frac{2 \tilde{\Delta}_{\ell - 1}}{\mu} =
  \frac{2 \cdot 2^{m - \ell + 1} \tilde{\Delta}_m}{\mu} <
  2^{m - \ell + 1} \Delta^\rowvar_i\,.
\end{align*}
From the above inequalities and the definition of $n_\ell$, it follows that
\begin{align*}
  \E{\sum_{\ell = 0}^m \condE{\rnd{R}^\rowvar_\ell(i)}{\cH_\ell} \I{\cF_\ell}}
  & \leq \sum_{\ell = 0}^m (\Delta^\rowvar_i + \max_{j \in \rnd{J}_\ell} \Delta^\colvar_j) (n_\ell - n_{\ell - 1}) \\
  & \leq \sum_{\ell = 0}^m (\Delta^\rowvar_i + 2^{m - \ell + 1} \Delta^\rowvar_i) (n_\ell - n_{\ell - 1}) \\
  & \leq \Delta^\rowvar_i \left(n_m + \sum_{\ell = 0}^m 2^{m - \ell + 1} n_\ell\right) \\
  & \leq \Delta^\rowvar_i \left(2^{2 m + 2} \log n + 1 +
  \sum_{\ell = 0}^m 2^{m - \ell + 1} (2^{2 \ell + 2} \log n + 1)\right) \\
  & = \Delta^\rowvar_i \left(2^{2 m + 2} \log n + 1 +
  \sum_{\ell = 0}^m 2^{m + \ell + 3} \log n + \sum_{\ell = 0}^m 2^{m - \ell + 1}\right) \\
  & \leq \Delta^\rowvar_i (5 \cdot 2^{2 m + 2} \log n + 2^{m + 2}) + 1 \\
  & \leq 6 \cdot 2^4 \cdot 2^{2 m - 2} \Delta^\rowvar_i \log n + 1\,,
\end{align*}
where the last inequality follows from $\log n \geq 1$ for $n \geq 3$. From the definition of $\tilde{\Delta}_{m - 1}$ in \eqref{eq:gap and tilde gap}, we have that
\begin{align*}
  2^{m - 1} =
  \frac{1}{\tilde{\Delta}_{m - 1}} \leq
  \frac{2}{\mu \Delta^\rowvar_i}\,.
\end{align*}
Now we chain all above inequalities and get that
\begin{align*}
  \E{\sum_{\ell = 0}^{n - 1} \condE{\rnd{R}^\rowvar_\ell(i)}{\cH_\ell} \I{\cF_\ell}} \leq
  6 \cdot 2^4 \cdot 2^{2 m - 2} \Delta^\rowvar_i \log n + 1 \leq
  \frac{384}{\mu^2 \Delta^\rowvar_i} \log n + 1\,.
\end{align*}
This concludes the first part of our proof.

\vspace{0.1in}

\textbf{Row $i$ is optimal}

Let row $i$ be optimal and $m$ be the first stage where $\tilde{\Delta}_m < \mu \Delta^\colvar_{\min} / 2$. Then similarly to the first part of the analysis,
\begin{align*}
  \E{\sum_{\ell = 0}^{n - 1} \condE{\rnd{R}^\rowvar_\ell(i)}{\cH_\ell} \I{\cF_\ell}} \leq
  \sum_{\ell = 0}^m (\max_{j \in \rnd{J}_\ell} \Delta^\colvar_j) (n_\ell - n_{\ell - 1}) \leq
  \frac{384}{\mu^2 \Delta^\colvar_{\min}} \log n + 1\,.
\end{align*}
This concludes our proof.
\end{proof}

\begin{lemma}
\label{lem:componentwise regret} Let $\rnd{u} \sim P_\rowvar$ and $\rnd{v} \sim P_\colvar$ be drawn independently. Then the expected regret of choosing any row $i \in [K]$ and column $j \in [L]$ is bounded from above as
\begin{align*}
  \E{\rnd{u}(i^\ast) \rnd{v}(j^\ast) - \rnd{u}(i) \rnd{v}(j)} \leq
  \Delta^\rowvar_i + \Delta^\colvar_j\,.
\end{align*}
\end{lemma}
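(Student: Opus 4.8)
The plan is to reduce everything to a deterministic inequality about the mean vectors $\bar u$ and $\bar v$, and then to peel it apart with a single add-and-subtract step. First I would use the independence of $\rnd{u}$ and $\rnd{v}$: since $\rnd{u} \sim P_\rowvar$ and $\rnd{v} \sim P_\colvar$ are drawn independently, $\E{\rnd{u}(i^\ast) \rnd{v}(j^\ast)} = \bar u(i^\ast) \bar v(j^\ast)$ and $\E{\rnd{u}(i) \rnd{v}(j)} = \bar u(i) \bar v(j)$, so the quantity to be bounded is exactly $\bar u(i^\ast) \bar v(j^\ast) - \bar u(i) \bar v(j)$.

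Next I would decompose this difference by inserting the cross term $\bar u(i^\ast) \bar v(j)$:
\begin{align*}
  \bar u(i^\ast) \bar v(j^\ast) - \bar u(i) \bar v(j)
  = \bar u(i^\ast) \bigl(\bar v(j^\ast) - \bar v(j)\bigr) + \bar v(j) \bigl(\bar u(i^\ast) - \bar u(i)\bigr)
  = \bar u(i^\ast) \Delta^\colvar_j + \bar v(j) \Delta^\rowvar_i\,,
\end{align*}
using the definitions of the gaps in \eqref{eq:gaps}. Finally I would invoke that $\bar u$ and $\bar v$ take values in $[0, 1]$, so $\bar u(i^\ast) \leq 1$ and $\bar v(j) \leq 1$, together with the nonnegativity of the gaps $\Delta^\rowvar_i, \Delta^\colvar_j \geq 0$ (which holds because $(i^\ast, j^\ast)$ maximizes the respective mean), to conclude $\bar u(i^\ast) \Delta^\colvar_j + \bar v(j) \Delta^\rowvar_i \leq \Delta^\colvar_j + \Delta^\rowvar_i$.

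There is no real obstacle here: the only subtlety is choosing which cross term to insert (one could equally use $\bar u(i) \bar v(j^\ast)$, giving the symmetric bound $\bar v(j^\ast) \Delta^\rowvar_i + \bar u(i) \Delta^\colvar_j$), and noting that the boundedness of the entries in $[0,1]$ is what makes the product structure collapse to an additive bound. The whole argument is a two-line computation once independence is used.
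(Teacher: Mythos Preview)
Your proposal is correct and essentially identical to the paper's proof: both use independence to reduce to the deterministic quantity $\bar u(i^\ast)\bar v(j^\ast)-\bar u(i)\bar v(j)$, then apply a single add-and-subtract decomposition together with the $[0,1]$ bounds. The only cosmetic difference is that the paper inserts the cross term $\bar u(i)\bar v(j^\ast)$ (the symmetric choice you yourself mention), obtaining $\bar v(j^\ast)\Delta^\rowvar_i + \bar u(i)\Delta^\colvar_j$ before bounding by $\Delta^\rowvar_i+\Delta^\colvar_j$.
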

\begin{proof}
Note that for any $x, y, x^\ast, y^\ast \in [0, 1]$,
\begin{align*}
  x^\ast y^\ast - x y =
  x^\ast y^\ast - x y^\ast + x y^\ast - x y =
  y^\ast (x^\ast - x) + x (y^\ast - y) \leq
  (x^\ast - x) + (y^\ast - y)\,.
\end{align*}
By the independence of the entries of $\rnd{u}$ and $\rnd{v}$, and from the above inequality,
\begin{align*}
  \E{\rnd{u}(i^\ast) \rnd{v}(j^\ast) - \rnd{u}(i) \rnd{v}(j)} =
  \bar{u}(i^\ast) \bar{v}(j^\ast) - \bar{u}(i) \bar{v}(j) \leq
  (\bar{u}(i^\ast) - \bar{u}(i)) + (\bar{v}(j^\ast) - \bar{v}(j))\,.
\end{align*}
This concludes our proof.
\end{proof}

\section{Lower Bound}
\label{sec:LBdetails}

In this section we present the missing details of the proof of \cref{thm:lower bound}. Recall that we need to bound from below the value of $f(\bar{u},\bar{v})$ where
\begin{align*}
f(\bar{u}, \bar{v})
= {} & \inf_{c \in [0,\infty)^{K \times L}} \sum_{i = 1}^K \sum_{j = 1}^L
(\bar{u}(i^\ast) \bar{v}(j^\ast) - \bar{u}(i) \bar{v}(j)) c_{i, j} \\
& \text{s.t. } \forall (\bar{u}', \bar{v}') \in B(\bar{u}, 
\bar{v}): \\
& \phantom{\text{s.t.}} \quad \!
\sum_{i = 1}^K \sum_{j = 1}^L d(\bar{u}(i) \bar{v}(j), \bar{u}'(i) \bar{v}'(j)) 
c_{i, j} \geq 1
\end{align*}
and 
\[
  B(\bar{u},\bar{v})= \{ (\bar{u}',\bar{v}') \in [0,1]^K\times [0,1]^L 
 : 
  \bar{u}(i^\ast) = \bar{u}'(i^\ast),\, 
  \bar{v}(j^\ast) = \bar{v}'(j^\ast)  ,\, 
  w^\ast(\bar{u},\bar{v})<w^\ast(\bar{u}',\bar{v}')  \}\,.
\] 
Without loss of generality, we assume that the optimal action 
in the original model $(\bar{u}, \bar{v})$ is $(i^*,j^*)=(1,1)$.
Moreover, we consider a class of \emph{identifiable} bandit models, meaning that we assume that
\[
\forall (i,i',j,j')\in [0,1]^{2K}\times[0,1]^{2L}, \quad 
(i,j)\neq (i',j') \implies 0<d(\bar{u}(i)\bar{v}(j),\bar{u}(i')\bar{v}(j'))<+\infty.
\]
This implies in particular that $\bar{u}(i^*)\bar{v}(j^*)$ must be less than $1$. An intuitive justification of this assumption is the following. Remark that 
for the Bernoulli problem we consider here, if the mean of the best arm is exactly $1$, the rewards from optimal pulls 
are always $1$ so that the empirical average is always exactly $1$ and as we cap the UCBs to $1$, the optimal arm 
is always a candidate to the next pull, which leads to constant regret. 
Also note that by our assumption, the optimal action is unique. 
To get a lower bound, we consider the same optimization problem as above, but replace $B$ with its subset. Clearly, this can only decrease the optimal value.

Concretely, we consider only those models in $B(\bar{u},\bar{v})$ where only one parameter changes at a time. Let
\begin{align*}
B_\rowvar(\bar{u},\bar{v}) &= \{ (\bar{u}',\bar{v})  : 
											\bar{u}'\in [0,1]^K,\,
											\exists i_0\in \{2,\dots,K\}, \epsilon\in [0,1] \text{ s.t. }
[\forall i \neq i_0: \bar{u}'(i) = \bar{u}(i)] \text{ and } \bar{u}'(i_0) = \bar{u}(1) + \epsilon
											\}\,,\\
B_\colvar(\bar{u},\bar{v}) &= \{ (\bar{u},\bar{v}')  : 
											\bar{v}'\in [0,1]^L,\,
											\exists j_0\in \{2,\dots,L\}, \epsilon\in [0,1] \text{ s.t. }
[\forall j \neq j_0: \bar{v}'(j) = \bar{v}(j)] \text{ and } \bar{v}'(j_0) = \bar{v}(1) + \epsilon
											\}\,.
\end{align*}
Let $f'(\bar{u}, \bar{v})$ be the optimal value of the above optimization problem when $B(\bar{u}, \bar{v})$ is replaced by $B_\rowvar(\bar{u}, \bar{v}) \cup B_\colvar(\bar{u}, \bar{v}) \subset B(\bar{u}, \bar{v})$. Now suppose that $(\bar{u}', \bar{v}') \in B_\rowvar(\bar{u}, \bar{v})$ and $i_0 = 2$. Then, for any $i \neq 2$ and $j \in [L]$, $d(\bar{u}(i) \bar{v}(j), \bar{u}'(i) \bar{v}'(j)) = 0$; and for $i = 2$ and any $j \in [L]$, $d(\bar{u}(i) \bar{v}(j), \bar{u}'(i) \bar{v}'(j)) = d(\bar{u}(2) \bar{v}(j), (\bar{u}(1) + \epsilon) \bar{v}(j))$. Hence, 
\begin{align*}
\sum_{i = 1}^K \sum_{j = 1}^L d(\bar{u}(i) \bar{v}(j), \bar{u}'(i) \bar{v}'(j)) 
=\sum_{j = 1}^L d(\bar{u}(2) \bar{v}(j), (\bar{u}(1)+\epsilon) \bar{v}(j))\,.
\end{align*}
Reasoning similarly for $B_\colvar(\bar{u},\bar{v})$, we see that $f'(\bar{u},\bar{v})$ satisfies
\begin{align*}
f'(\bar{u}, \bar{v})
= {} & \inf_{c \in [0,\infty)^{K \times L}} \sum_{i = 1}^K \sum_{j = 1}^L
(\bar{u}(i^\ast) \bar{v}(j^\ast) - \bar{u}(i) \bar{v}(j)) c_{i, j} \\
& \text{s.t.} \quad \forall  \epsilon_\colvar\in (0,1-\bar{v}(1)], \epsilon_\rowvar\in (0,1-\bar{u}(1)]\,\\
& \phantom{\text{s.t} \,}  \quad \forall j\neq 1, 
		\sum_{i = 1}^K  d(\bar{u}(i) \bar{v}(j), \bar{u}(i) (\bar{v}(1)+\epsilon_\colvar))c_{i, j} \geq 1\\
&  \phantom{\text{s.t} \,} \quad \forall i\neq 1, \, 
		\sum_{j= 1}^L  d(\bar{u}(i) \bar{v}(j), (\bar{u}(1)+\epsilon_\rowvar) \bar{v}(j))c_{i, j} \geq 1.
\end{align*}
Clearly, the smaller the coefficients of $c_{i, j}$ in the constraints, the tighter the constraints. We obtain the smallest coefficients when $\epsilon_\colvar, \epsilon_\rowvar \to 0$.
By continuity, we get 
\begin{align*}
f'(\bar{u}, \bar{v})
= {} & \inf_{c \in [0,\infty)^{K \times L}}
 \sum_{i = 1}^K \sum_{j = 1}^L (\bar{u}(i^\ast) \bar{v}(j^\ast) - \bar{u}(i) \bar{v}(j)) c_{i, j} \\
& \text{s.t.} \quad \forall j\neq 1, \,
\sum_{i = 1}^K  d(\bar{u}(i) \bar{v}(j), \bar{u}(i) \bar{v}(1))c_{i, j} \geq 1\\
&  \phantom{\text{s.t} \,} \quad \forall i\neq 1, \, 
\sum_{j= 1}^L  d(\bar{u}(i)  \bar{v}(j), \bar{u}(1) \bar{v}(j))c_{i, j} \geq 1.
\end{align*}

Let
\begin{align*}
c_{i, j} =
\begin{cases}
1 / d(\bar{u}(i) \bar{v}(1), \bar{u}(1) \bar{v}(1))\,, & j = 
1 \text{ and } i>1\,; \\
1 / d(\bar{u}(1) \bar{v}(j), \bar{u}(1) \bar{v}(1))\,, & i = 
1 \text{ and } j>1\,; \\
0\,, & \text{otherwise}.
\end{cases}
\end{align*}
We claim that $(c_{i, j})$ is an optimal solution for the problem defining $f'$.

First, we show that $(c_{i, j})$ is feasible.
Let $i\ne 1$. Then
$
\sum_{j= 1}^L  d(\bar{u}(i)  \bar{v}(j), \bar{u}(1) \bar{v}(j))c_{i, j}
= d(\bar{u}(i)  \bar{v}(1), \bar{u}(1) \bar{v}(1)) c_{i,1} = 1
$.
Similarly, we can verify the other constraint, too, showing that $(c_{i, j})$ is indeed feasible.

Now, it remains to show that the proposed solution is indeed optimal. 
We prove this by
contradiction, following the ideas of \cite{combes15learning}. We 
suppose that there exists a 
solution $c$ of the optimization problem such that $c_{i_0,j_0}>0$ for $i_0\neq 
1$ and $j_0\neq 1$. Then, we prove that it is possible to find another feasible 
solution $c'$ but with an objective lower than that obtained with $c$, 
contradicting the assumption of optimality of $c$.

We define $c'$ as follows, redistributing the mass of $c_{i_0,j_0}$ on the 
first row and the first column:
\begin{align*}
c'_{i,j} =
\begin{cases}
0\,, & i=i_0 \text{ and } j=j_0\,;\\
c_{i_0,1} +c_{i_0,j_0} 
\frac{\displaystyle 
d(\bar{u}(i_0)\bar{v}(j_0),\bar{u}(1)\bar{v}(j_0))}{\displaystyle 
d(\bar{u}(i_0)\bar{v}(1),\bar{u}(1)\bar{v}(1))}\,,
& i=i_0 \text{ and } j=1\,;\\
& \\
c_{1,{j_0}} +c_{i_0,j_0} 
\frac{\displaystyle 
d(\bar{u}(i_0)\bar{v}(j_0),\bar{u}(i_0)\bar{v}(1))}{\displaystyle 
d(\bar{u}(1)\bar{v}(j_0),\bar{u}(1)\bar{v}(1))}\,,
& i=1 \text{ and } j=j_0\,;\\
c_{i,j}\,, & \mathrm{otherwise}.		
\end{cases}
\end{align*}

It is easily verified that if $c$ satisfies the constraints, then so 
does $c'$ because the missing mass of $c_{i_0,j_0}$ is simply redistributed on  
$c'_{i_0,1}$ and $c'_{1,j_0}$. For example, for $i=i_0$
we have 
\begin{align*}
&\sum_{j=1}^L d(\bar{u}(i_0)\bar{v}(j),\bar{u}(1)\bar{v}(j))c'_{i_0,j}  - 
\sum_{j=1}^L 
d(\bar{u}(i_0)\bar{v}(j),\bar{u}(1)\bar{v}(j))c_{i_0,j}  \\ 
& = d(\bar{u}(i_0)\bar{v}(1),\bar{u}(1)\bar{v}(1))c_{i_0,j_0} 
\frac{d(\bar{u}(i_0)\bar{v}(j_0),\bar{u}(1)\bar{v}(j_0))}{d(\bar{u}(i_0)\bar{v}(1),\bar{u}(1)\bar{v}(1))}
- c_{i_0,j_0}d(\bar{u}(i_0)\bar{v}(j_0),\bar{u}(1)\bar{v}(j_0))
\\
& =0
\end{align*}
while for $i\not\in\{ 1,i_0\}$, $c'_{i,j} = c_{i,j}$, so 
$\sum_{j=1}^L d(\bar{u}(i)\bar{v}(j),\bar{u}(1)\bar{v}(j))c'_{i,j} 
=\sum_{j=1}^L d(\bar{u}(i)\bar{v}(j),\bar{u}(1)\bar{v}(j))c_{i,j} $.

Now, we prove that the objective function is lower for $c'$ than for 
$c$ by showing that the difference between them is negative:
\begin{align*}
\Delta &\doteq \sum_{i=1}^K\sum_{j=1}^L (\bar{u}(1)\bar{v}(1)-\bar{u}(i)\bar{v}(j))c'_{i,j} - 
\sum_{i=1}^K\sum_{j=1}^L (\bar{u}(1)\bar{v}(1)-\bar{u}(i)\bar{v}(j))c_{i,j}\\ 
& =\quad\,\,\,\, c_{i_0,j_0}\,(\bar{u}(1)\bar{v}(1) -\bar{u}(i_0)\bar{v}(1))
\frac{d(\bar{u}(i_0)\bar{v}(j_0),\bar{u}(1)\bar{v}(j_0))}
{d(\bar{u}(i_0)\bar{v}(1),\bar{u}(1)\bar{v}(1))} \nonumber \\
&\qquad + c_{i_0,j_0}(\bar{u}(1)\bar{v}(1) -\bar{u}(1)\bar{v}(j_0)) 
\frac{d(\bar{u}(i_0)\bar{v}(j_0),\bar{u}(i_0)\bar{v}(1))}
{d(\bar{u}(1)\bar{v}(j_0),\bar{u}(1)\bar{v}(1))} \nonumber \\
&\qquad- c_{i_0,j_0}(\bar{u}(1)\bar{v}(1) -\bar{u}(i_0)\bar{v}(j_0)) \\
& =\quad\,\,\,\, c_{i_0,j_0}\,\Big\{ (\bar{u}(1) -\bar{u}(i_0))\bar{v}(1)
\frac{d(\bar{u}(i_0)\bar{v}(j_0),\bar{u}(1)\bar{v}(j_0))}
{d(\bar{u}(i_0)\bar{v}(1),\bar{u}(1)\bar{v}(1))} \nonumber \\
&\qquad\qquad\quad + (\bar{v}(1) -\bar{v}(j_0)) \bar{u}(1)
\frac{d(\bar{u}(i_0)\bar{v}(j_0),\bar{u}(i_0)\bar{v}(1))}
{d(\bar{u}(1)\bar{v}(j_0),\bar{u}(1)\bar{v}(1))} \nonumber \\
&\qquad\qquad\quad- (\bar{u}(1)\bar{v}(1) -\bar{u}(i_0)\bar{v}(j_0)) \Big\}\\
\label{eq:diff}
\end{align*}
Writing
\[
\bar{u}(1)\bar{v}(1) - \bar{u}(i_0)\bar{v}(j_0) = 
(\bar{u}(1)-\bar{u}(i_0))\bar{v}(j_0)
+ 
(\bar{v}(1)-\bar{v}(j_0)) \bar{u}(1)
\]
we get
\begin{align}
\Delta
& =c_{i_0,j_0}(\bar{u}(1) -\bar{u}(i_0)) \left( \bar{v}(1) 
\frac{d(\bar{u}(i_0)\bar{v}(j_0),\bar{u}(1)\bar{v}(j_0))}
	   {d(\bar{u}(i_0)\bar{v}(1),\bar{u}(1)\bar{v}(1))}
 -\bar{v}(j_0) \right) 
\nonumber\\
&+ c_{i_0,j_0}(\bar{v}(1) -\bar{v}(j_0)) \left(\bar{u}(1) 
\frac{d(\bar{u}(i_0)\bar{v}(j_0),\bar{u}(i_0)\bar{v}(1))}
	   {d(\bar{u}(1)\bar{v}(j_0),\bar{u}(1)\bar{v}(1))} 
- \bar{u}(1)\right). 			
\nonumber
\end{align}

To finish the proof, it suffices to prove that both terms of the above sum are
negative. First, $\bar{u}(1) -\bar{u}(i_0),\bar{v}(1) -\bar{v}(j_0),c_{i_0,j_0}>0$,
hence it remains to consider the terms involving the ratios of KL divergences.
Note that both ratios take the form $\frac{d(\alpha p, \alpha q )}{d(p,q)}$ with $\alpha< 1$, but one
must be compared to $\alpha <1$ while the other can simply be compared to 1.
For the first such term, showing the negativity of the difference is equivalent to showing that 
for $\alpha=\bar{v}(j_0)/\bar{v}(1)<1$, 
\[
\frac{d(\alpha \bar{u}(i_0)\bar{v}(1),\alpha \bar{u}(1)\bar{v}(1))}
     {d(\bar{u}(i_0)\bar{v}(1),\bar{u}(1)\bar{v}(1))} < \alpha.
\]
Lemma~\ref{lem:incrkappa} below shows that for fixed $(p,q)\in (0,1)^2$, $f:\alpha \mapsto d(\alpha p,\alpha q)$ is convex, 
which proves the above inequality.
For the second term, it remains to see whether the ratio of the KL divergences
is below one. \cref{lem:incrkappa} proven below shows that the function $\alpha \mapsto d(\alpha 
p,\alpha q)$ is increasing on $(0,1)$, showing that
\[
\frac{d(\bar{u}(i_0)\bar{v}(j_0),\bar{u}(i_0)\bar{v}(1))}
	   {d(\bar{u}(1)\bar{v}(j_0),\bar{u}(1)\bar{v}(1))} <1\,.
\]
Thus, the proof is finished once we prove \cref{lem:incrkappa}.
\if0
because $d(\bar{u}(i_0)\bar{v}(j_0),\bar{u}(1)\bar{v}(j_0))< d(\bar{u}(i_0)\bar{v}(1),\bar{u}(1)\bar{v}(1))$ 
for fixed $\bar{u}(i_0) < \bar{u}(1)$.
Moreover, the second term is also negative because of the property
proven in \cref{lem:incrkappa} for fixed $p < q$. 
\fi


\begin{lemma}\label{lem:incrkappa}
	Let $p,q$ be any fixed real numbers in $(0,1)$. The function 
	$f:\alpha 
	\mapsto d(\alpha p,\alpha q)$  is convex and increasing on $(0,1)$. As 
	a consequence, for any $\alpha<1$, $d(\alpha p, \alpha q) < d(p,q)$.
\end{lemma}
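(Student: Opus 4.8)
The plan is to establish convexity and monotonicity of $f(\alpha) = d(\alpha p, \alpha q)$ directly from the closed form of the binary relative entropy, $d(x,y) = x \log\frac{x}{y} + (1-x)\log\frac{1-x}{1-y}$, and then read off the strict inequality as a corollary. I treat the nontrivial case $p \ne q$; if $p = q$ then $f \equiv 0$ and there is nothing to prove, and the strict consequence is only ever invoked for $p \ne q$ in the proof of \cref{thm:lower bound}. Since $p, q \in (0,1)$ we have $\alpha p, \alpha q \in (0,1)$ for every $\alpha \in (0,1)$ — in fact the expression $f(\alpha) = \alpha p \log\frac{p}{q} + (1-\alpha p)\log\frac{1-\alpha p}{1-\alpha q}$ is smooth for $\alpha \in [0, 1/\max\{p,q\})$, so $f$ is smooth on $(0,1)$, satisfies $f(0) = 0$, and extends smoothly to $\alpha = 1$ with $f(1) = d(p,q)$.

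The core of the argument is a direct computation of $f''$. Differentiating the displayed expression for $f$ twice and clearing the denominators $1-\alpha p$ and $(1-\alpha q)^2$, I expect the numerator to collapse to a perfect square, giving
\begin{align*}
  f''(\alpha) = \frac{(p-q)^2}{(1-\alpha p)(1-\alpha q)^2} > 0 \qquad \text{for } \alpha \in (0,1),
\end{align*}
which yields strict convexity of $f$ on $(0,1)$ in one stroke. The only real obstacle is the algebraic bookkeeping in this simplification: one must verify that the cross terms cancel so that the numerator is exactly $(p-q)^2$. A computation-free route to \emph{plain} convexity is available — $d$ is jointly convex and $\alpha \mapsto (\alpha p, \alpha q)$ is affine — but $d$ is flat in the diagonal direction and hence not strictly jointly convex, so strictness of $f$ genuinely rests on how $d$ behaves along the non-diagonal line $\{(\alpha p, \alpha q)\}$ (here $p \ne q$), which is exactly what the $f''$ computation captures.

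For monotonicity I would combine $f'' > 0$ with the behaviour at the left endpoint. Convexity makes $f'$ strictly increasing, so it suffices to show $\lim_{\alpha \to 0^+} f'(\alpha) \ge 0$. A short calculation gives $\lim_{\alpha \to 0^+} f'(\alpha) = p\log(p/q) - (p - q)$, which is nonnegative by the elementary bound $\log t \ge 1 - 1/t$ applied with $t = p/q$. Hence $f'(\alpha) > 0$ on $(0,1)$ and $f$ is strictly increasing. (Equivalently, one can avoid computing $f'(0^+)$ altogether: a convex $f$ with $f \ge 0$ and $\lim_{\alpha \to 0^+} f(\alpha) = 0$ cannot have a negative derivative anywhere, since it would otherwise dip below $0$ near the origin.)

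The stated consequence follows immediately: $f$ is strictly increasing and finite at $1$, so $d(\alpha p, \alpha q) = f(\alpha) < f(1) = d(p,q)$ for every $\alpha < 1$. For completeness, the sharper bound used in the proof of \cref{thm:lower bound}, namely $f(\alpha) < \alpha f(1)$, follows from strict convexity together with $f(0) = 0$ via $f(\alpha) = f\big(\alpha \cdot 1 + (1-\alpha)\cdot 0\big) < \alpha f(1) + (1-\alpha) f(0) = \alpha f(1)$.
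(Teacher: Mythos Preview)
Your proof is correct and follows essentially the same approach as the paper: establish strict convexity via the second-derivative computation $f''(\alpha) = (p-q)^2/[(1-\alpha p)(1-\alpha q)^2] > 0$, then combine this with $f'(0^+) \ge 0$ to get monotonicity. The paper reparametrizes into polar coordinates before differentiating and proves $f'(0) \ge 0$ by contradiction from positivity of the KL divergence (exactly your parenthetical alternative), whereas you work directly in $\alpha$ and verify $f'(0^+) = p\log(p/q) - (p-q) \ge 0$ via $\log t \ge 1 - 1/t$; your presentation is a bit cleaner, but the substance is identical.
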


\begin{proof}
	We first re-parametrize our problem into polar coordinates $(r,\theta)$ :
	$$\begin{cases}
	p & =  r\cos \theta \\
	q & =  r\sin \theta 
	\end{cases}
	$$
	In order to prove the statement of the lemma, it now suffices to prove 
	that $f_\theta: r \mapsto d(r\sin \theta, r \cos \theta)$ is 
	increasing. We 
	have
	$$ f_\theta(r) = r\cos \theta \log\left(\frac{\cos \theta}{\sin 
		\theta}\right) 
	+
	(1-r\cos \theta)\log\left(\frac{1-r\cos\theta}{1-r\sin\theta}\right)
	$$
	which can be differentiated along $r$ for a fixed $\theta$ :
	$$f_\theta'(r) = 
	cos\theta\log\left(\frac{1-r\sin\theta}{1-r\cos\theta}\right)+
	\frac{\sin\theta-\cos\theta}{1-r\sin\theta}+
	cos\theta\log\left(\frac{\cos\theta}{\sin\theta}\right).
	$$
	Now, we can differentiate again along $r$ and after some calculations 
	we obtain 
	$$ f_\theta''(r) = \frac{(\sin 
		\theta-\cos\theta)^2}{(1-r\sin\theta)^2(1-r\cos\theta)}>0$$
	which proves that the function $f_\theta$ is convex. It remains to 
	prove that 
	$f_\theta'(0)\geq 0$ for any $\theta\in (0,\pi/2)$. 
	We rewrite $f_\theta'(0)$ as a function of $\theta$ :
	\begin{align*}
	f_\theta'(0) 			
	&=\cos\theta\log\left(\frac{\cos\theta}{\sin\theta}\right)
	+\sin\theta-\cos\theta\\
	& := \phi(\theta)
	\end{align*}
	Let us assume that there exists $\theta_0\in (0,\pi/2)$ such that 
	$\phi(\theta_0)<0$. Then, in this direction $f_\theta'(0)<0$ and as 
	$f_\theta(0)=0$ 
	for any $\theta\in (0,\pi/2)$, it means that there exists $r_0>0$ such 
	that $f_{\theta_0}(r_0)<0$. Yet, 
	$f_{\theta_0}(r_0)=d(r_0\cos\theta_0,r_0\sin\theta_0)>0$ because of 
	the positivity of the KL divergence. 
	
	So by contradiction, we proved that for all $\theta\in (0,\pi/2)$, 
	$f_\theta'(0)=\phi(\theta)\geq0$ and by convexity $f_\theta$ is 
	non-negative and non-decreasing on $[0,+ \infty)$. 
	
\end{proof}

\if0
\begin{corollary} \label{cor:incr_slopes}
	Let $p,q$ be any fixed real numbers in $(0,1)$, $\alpha >\beta$, then 
	$$ \frac{d(\alpha p,\alpha q)}{\alpha}> \frac{d(\beta p, \beta 
		q)}{\beta}.$$
\end{corollary}
\fi
\subsection{Gaussian payoffs}
The lower bound naturally extends to other classes of distributions, such as Gaussians.
For illustration here we show the lower bound for this case.
We still assume that the means are in $[0,1]$, as before.
We also assume that all payoffs have a common variance $\sigma^2>0$.
Recall that the Kullback-Leibler 
divergence between two distributions with fixed variance $\sigma^2$ is 
$d(p,q)=(p-q)^2/(2\sigma^2)$. Then, the proof of \cref{thm:lower bound} can be repeated with minor differences (in particular, the proof of the analogue of \cref{lem:incrkappa} becomes trivial) and
we get the following result: 
\begin{theorem} \label{cor:gaussLB}
 For any $(\bar{u}, \bar{v}) \in [0, 1]^K \times [0, 1]^L$ with a unique optimal action
and any uniformly efficient algorithm $\mathcal{A}$ whose regret is $R(n)$,
assuming Gaussian row and column rewards with common variance $\sigma^2$,
	\[\liminf_{n\rightarrow\infty}\frac{R(n)}{\log(n)}\geq 
	\frac{2\sigma^2}{\bar{v}(j^*)}\sum_{i\in[K]\setminus 
	\{i^*\}}\frac{1}{\Delta^\rowvar_i}+
	\frac{2\sigma^2}{\bar{u}(i^*)}\sum_{j\in[L]\setminus 
	\{j^*\}}\frac{1}{\Delta^\colvar_j}\,.
	\]
\end{theorem}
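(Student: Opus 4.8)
The plan is to re-run the proof of \cref{thm:lower bound} almost verbatim, replacing the Bernoulli KL divergence by the Gaussian one, $d(p,q) = (p-q)^2/(2\sigma^2)$, at every occurrence. First I would invoke the Graves--Lai / Kaufmann \etal\ change-of-measure bound \cite{graves97asymptotically,kaufmann16complexity} exactly as before to obtain $\liminf_{n\to\infty} R(n)/\log n \geq f(\bar{u}, \bar{v})$, where $f$ is the value of the same linear program over the set $B(\bar{u}, \bar{v})$ of confusing models. This step is agnostic to the parametric family beyond the identifiability requirement that $d(\bar{u}(i)\bar{v}(j), \bar{u}(i')\bar{v}(j'))$ be finite and strictly positive whenever $(i,j)\neq(i',j')$, which holds here because all mean products lie in $[0,1]$ and $\sigma^2>0$. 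I would then relax $B(\bar{u},\bar{v})$ to $B_\rowvar(\bar{u},\bar{v})\cup B_\colvar(\bar{u},\bar{v})$ and let $\epsilon_\rowvar,\epsilon_\colvar\to 0$, reaching the same relaxed program that defines $f'(\bar{u},\bar{v})$ but now with the Gaussian $d(\cdot,\cdot)$; none of these manipulations uses anything specific about the Bernoulli family.

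The only place where structural properties of $d$ genuinely enter is the verification that the sparse candidate
\begin{align*}
  c^\ast_{i,j} =
  \begin{cases}
    1 / d(\bar{u}(i)\bar{v}(j^\ast), \bar{u}(i^\ast)\bar{v}(j^\ast))\,, & j = j^\ast,\ i \neq i^\ast\,;\\
    1 / d(\bar{u}(i^\ast)\bar{v}(j), \bar{u}(i^\ast)\bar{v}(j^\ast))\,, & i = i^\ast,\ j \neq j^\ast\,;\\
    0\,, & \text{otherwise,}
  \end{cases}
\end{align*}
is optimal for $f'$, via the mass-redistribution argument of Combes \etal~\cite{combes15learning}: starting from a putative optimum with $c_{i_0,j_0}>0$ for some $i_0\neq i^\ast$, $j_0\neq j^\ast$, one moves that mass onto the first row and first column, preserves feasibility, and strictly decreases the objective, a contradiction. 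That negativity computation appeals to \cref{lem:incrkappa}. Here the analogue of \cref{lem:incrkappa} is immediate: $d(\alpha p,\alpha q) = \alpha^2 (p-q)^2/(2\sigma^2)$ is plainly convex and strictly increasing in $\alpha$ on $(0,1)$, and $d(\alpha p,\alpha q)/\alpha = \alpha (p-q)^2/(2\sigma^2)$ is increasing; feeding these two facts into the same two-term sign check as in the Bernoulli proof shows the objective strictly decreases, so $c^\ast$ (whose feasibility follows from the same one-line identity) is optimal.

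Finally I would substitute $c^\ast$ into the objective and simplify with $d(p,q)=(p-q)^2/(2\sigma^2)$. For a suboptimal row $i$, the numerator is $\bar{u}(i^\ast)\bar{v}(j^\ast)-\bar{u}(i)\bar{v}(j^\ast)=\bar{v}(j^\ast)\,\Delta^\rowvar_i$ and the denominator is $(\bar{v}(j^\ast)\,\Delta^\rowvar_i)^2/(2\sigma^2)$, so the $i$-th term equals $2\sigma^2/(\bar{v}(j^\ast)\,\Delta^\rowvar_i)$; by symmetry the $j$-th column term equals $2\sigma^2/(\bar{u}(i^\ast)\,\Delta^\colvar_j)$. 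Summing over $i\in[K]\setminus\{i^\ast\}$ and $j\in[L]\setminus\{j^\ast\}$ yields exactly the stated bound.

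I do not expect a genuine obstacle here: the content is entirely in confirming that the Bernoulli derivation used only (i) finiteness and positivity of $d$ for distinct mean products and (ii) convexity and monotonicity of $\alpha\mapsto d(\alpha p,\alpha q)$ on $(0,1)$, both of which the Gaussian family satisfies trivially, so that \cref{lem:incrkappa} can be replaced by a one-line observation. The only mild care needed is to re-check that the Graves--Lai optimization-based lower bound and the relaxation/limit steps are insensitive to swapping the Bernoulli family for the Gaussian one, which they are.
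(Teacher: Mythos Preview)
Your proposal is correct and follows exactly the approach the paper itself takes: the paper does not give a standalone proof of \cref{cor:gaussLB} but simply states that ``the proof of \cref{thm:lower bound} can be repeated with minor differences (in particular, the proof of the analogue of \cref{lem:incrkappa} becomes trivial),'' which is precisely what you spell out. One minor imprecision: positivity of $d(\bar{u}(i)\bar{v}(j),\bar{u}(i')\bar{v}(j'))$ for $(i,j)\neq(i',j')$ is not a consequence of the mean products lying in $[0,1]$ and $\sigma^2>0$; it still requires the identifiability assumption (distinct mean products) that the paper imposes as a standing hypothesis.
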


\end{document}